\documentclass{article}

\usepackage[final]{neurips_2022}

\usepackage[utf8]{inputenc} 
\usepackage[T1]{fontenc}    
\usepackage{hyperref}       
\usepackage{url}            
\usepackage{booktabs}       
\usepackage{amsfonts}       
\usepackage{nicefrac}       
\usepackage{microtype}      
\usepackage{xcolor}         
\usepackage{graphicx}
\usepackage{amsmath}
\usepackage{wrapfig}
\usepackage{caption}
\usepackage{subcaption}
\usepackage{listings}
\usepackage{algorithm}
\usepackage{algorithmic}
\usepackage{color}
\usepackage{url}
\usepackage{amssymb}
\usepackage{amsthm}

\newtheorem{theorem}{Theorem}
\newtheorem{lemma}{Lemma}
\newtheorem{prop}{Proposition}

\title{Robust Graph Structure Learning via \\ Multiple Statistical Tests}

\makeatletter
\newcommand{\printfnsymbol}[1]{%
  \textsuperscript{\@fnsymbol{#1}}%
}

\makeatother
\author{%
  Yaohua Wang \thanks{Equal contribution.} \\
  Alibaba Group \\
  \texttt{xiachen.wyh@alibaba-inc.com} \\
  \And
  Fangyi Zhang \printfnsymbol{1} \thanks{Work done while at Alibaba.} \\
  Queensland University of Technology \\ Centre for Robotics (QCR) \\
  \texttt{fangyi.zhang@qut.edu.au} \\
  \And
  Ming Lin \printfnsymbol{2} \\
  Amazon \\
  \texttt{minglamz@amazon.com} \\
  \And
  Senzhang Wang \thanks{Corresponding author.} \\
  Central South University \\
  \texttt{szwang@csu.edu.cn} \\
  \And
  Xiuyu Sun \printfnsymbol{3} \\
  Alibaba Group \\
  \texttt{xiuyu.sxy@alibaba-inc.com} \\
  \And
  Rong Jin \printfnsymbol{2} \\
  Twitter \\
  \texttt{rongjinemail@gmail.com} \\
}

\begin{document}

\maketitle

\begin{abstract}
Graph structure learning aims to learn connectivity in a graph from data. It is particularly important for many computer vision related tasks since no explicit graph structure is available for images for most cases. A natural way to construct a graph among images is to treat each image as a node and assign pairwise image similarities as weights to corresponding edges. It is well known that pairwise similarities between images are sensitive to the noise in feature representations, leading to unreliable graph structures. We address this problem from the viewpoint of statistical tests. By viewing the feature vector of each node as an independent sample, the decision of whether creating an edge between two nodes based on their similarity in feature representation can be thought as a {\it single} statistical test. To improve the robustness in the decision of creating an edge, multiple samples are drawn and integrated by {\it multiple} statistical tests to generate a more reliable similarity measure, consequentially more reliable graph structure. The corresponding elegant matrix form named $\mathcal{B}$\textbf{-Attention} is designed for efficiency. The effectiveness of multiple tests for graph structure learning is verified both theoretically and empirically on multiple clustering and ReID benchmark datasets.
Source codes are available at \url{https://github.com/Thomas-wyh/B-Attention}.
\end{abstract}

\section{Introduction}
\label{section: intro}
Graph structure learning plays an important role in Graph Convolutional Networks~(GCNs).
It seeks to discover underlying graph structures for better graph representation learning when graph structures are unavailable, noisy or corrupted.
These issues are typical in the field of vision tasks~\cite{wang2019linkage,yang2019learning,yang2020learning,guo2020density,nguyen2021clusformer,shen2021starfc,wang2022adanets,chum2007total,luo2019spectral,zhong2017re,sarfraz2018pose}.
As an example, there is no explicit graph structure between photos in your phone albums, so the graph-based approaches~\cite{guo2020density,nguyen2021clusformer,wang2022adanets} in photo management softwares seek to learn graph structure.
A popular research direction on it is to consider graph structure learning as similarity measure learning upon the node embedding space~\cite{GNNBook2022}.
They learn weighted adjacency matrices~\cite{guo2020density,nguyen2021clusformer,chen2019graphflow,chen2020reinforcement,on2020cut,yang2018glomo,huang2020location,chen2020iterative,henaff2015deep,li2018adaptive} as graphs and feed them to GCNs to learn representations. 

However, the main problem of the above weighted-based methods is that weights are not always accurate and reliable:
two nodes with smaller associations may have a higher weight score than those with larger associations. The root of this problem is closely related to the {\bf noise} in the high dimensional feature vectors extracted by the deep learning model, as similarities between nodes are usually decided based on their feature vectors.
The direct consequence of using an inaccurate graph structure is so called feature pollution~\cite{wang2022adanets}, leading to a performance degradation in downstream tasks. In our empirical studies, we found that close to $1/3$ of total weights are assigned to noisy edges when unnormalized cosine similarities between feature vectors are used to calculate weights.

Many methods have been developed to improve similarity measure between nodes in graph, including adding learnable parameters~\cite{chen2019graphflow,huang2020location,chen2020iterative}, introducing nonlinearity~\cite{chen2020reinforcement,on2020cut,yang2018glomo}, Gaussian kernel functions~\cite{henaff2015deep,li2018adaptive} and  self-attention~\cite{vaswani2017attention,Choi2020LearningTG,guo2020density,nguyen2021clusformer}. Despite the progress, they are limited to modifying similarity function, without explicitly addressing the noise in feature vectors, a more fundamental issue for graph structure learning. One way to capture the noise in feature vectors is to view the feature vector of each node as an independent sample from some unknown distribution associated with each node. As a result, the similarity score between two nodes can be thought as the expectation of a test to decide if two nodes share the same distribution. Since, only one vector is sampled for each node in the above methods, the statistical test is essentially based on a single sample from both nodes, making the calculated weights inaccurate and unreliable.

An possible solution towards addressing the noise in feature vectors is to run multiple tests, a common approach to reduce variance in sampling and decision making~\cite{bienayme1853considerations,chernoff1952measure,hoeffding1956distribution}.
Multiple samples are drawn for each node, and each sample
can be used to run an independent test to decide if two nodes share the same distribution. Based on the results from {\bf multiple tests}, we develop a novel and robust similarity measure that significantly improves the robustness and reliability of calculated weights.
This paper will show {\it how} to accomplish multiple tests in the absence of graph structures and theoretically prove {\it why} the proposed approach can produce a better similarity measure under specified conditions.
This similarity measure is also crafted into an efficient and elegant form of matrix, named $\mathcal{B}$\textbf{-Attention}.
$\mathcal{B}$\textbf{-Attention} utilizes the fourth order statistics of the features, which is a significant difference with popular attentions.
Benefiting from a better similarity measure produced by multiple tests, graph structures can be significantly improved, resulting in better graph representations and therefore boosting the performance of downstream tasks. We empirically verify the effectivness of the proposed method over the task of clustering and ReID.

In summary, this paper makes the following three major contributions:
(1)~To the best of our knowledge, this is the first paper to address inaccurate edge weights problem by statistical tests on the task of graph structure learning over images.
(2)~A novel and robust similarity measure is proposed based on multiple tests, proven theoretically to be superior and designed in an elegant matrix form to improve graph structure quality.
(3)~Extensive experiments further verify the robustness of the graph structure learning method against noise. State-of-the-art~(SOTA) performances are achieved on multiple clustering and ReID benchmarks with the learned graph structures.

\vspace{-2.2mm}
\section{Methodology}
\vspace{-2mm}
To handle the noise in feature vectors,
we model the feature vector of each node as a sample drawn from the underlying distribution associated with that node,
and the similarity score between two nodes as the expectation of a statistical test.
The \textbf{test} is to compare their similarity to a given threshold to decide if two nodes share the same distribution and that decision can be captured by a binary Bernoulli random variable.
The comparison in statistical test may lead to error edges,
i.e., \textbf{noisy edges}~(an edge connects two nodes of different categories)
or \textbf{missing edges}~(two nodes of the same category should be connected but are not).
The superiority of the similarity measure can be revealed by comparing
the probability of creating an error edge.
The proposed similarity measure will be introduced in Section~\ref{section: multiple_tests_similarity_metric}, and its further design in matrix form will be presented in Section~\ref{section: multiple_tests_on_GNNs}.

\subsection{Preliminary}
\label{section: chernoff}
\paragraph{Chernoff Bounds.}
Sums of random variables have been studied for a long time~\cite{bienayme1853considerations,chernoff1952measure,hoeffding1956distribution}.
Chernoff Bounds~\cite{chernoff1952measure} provide sharply decreasing bounds that are in the exponential form on tail distributions of sums of independent random variables.
It is a widely used fundamental tool in mathematics and computer science~\cite{kearns1994introduction,motwani1995randomized}.
An often used and looser form of Chernoff Bounds is as follows.

\begin{lemma}[Chernoff Bounds~\cite{chernoff1952measure}]
\label{thm:chernoff}
Suppose $X_i$ is a binary random variable taking value in set $\{0,1\}$ with $\mathbb{P}(X_i=1)=p_i$~. Define $X=\sum_1^n X_i$ where $\{X_1,X_2,\cdots,X_n\}$ are independently sampled. Then for any $0 < \epsilon < 1$,
\begin{align*}
    \mathbb{P}(X \geq (1+\epsilon)\mu) & \leq e^{-\frac{\epsilon^2}{3}\mu}, \\
    \mathbb{P}(X \leq (1-\epsilon)\mu) & \leq e^{-\frac{\epsilon^2}{3}\mu} \ .
\end{align*}
\end{lemma}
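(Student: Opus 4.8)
The plan is to use the standard exponential-moment (Chernoff) method, writing $\mu = \mathbb{E}[X] = \sum_{i=1}^n p_i$. First I would bound the upper tail by applying Markov's inequality to the exponentiated variable: for any $t > 0$,
\[
\mathbb{P}(X \geq (1+\epsilon)\mu) = \mathbb{P}\big(e^{tX} \geq e^{t(1+\epsilon)\mu}\big) \leq e^{-t(1+\epsilon)\mu}\,\mathbb{E}[e^{tX}].
\]
Since the $X_i$ are independent, the moment generating function factorizes as $\mathbb{E}[e^{tX}] = \prod_{i=1}^n \mathbb{E}[e^{tX_i}]$, and each Bernoulli factor satisfies $\mathbb{E}[e^{tX_i}] = 1 + p_i(e^t-1) \leq \exp\!\big(p_i(e^t-1)\big)$ by the elementary inequality $1+x \leq e^x$. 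Multiplying the factors gives $\mathbb{E}[e^{tX}] \leq \exp\!\big((e^t-1)\mu\big)$, so that
\[
\mathbb{P}(X \geq (1+\epsilon)\mu) \leq \exp\!\Big(\mu\big[(e^t-1) - t(1+\epsilon)\big]\Big).
\]

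Next I would optimize the free parameter. Minimizing the exponent over $t>0$ yields $t = \ln(1+\epsilon)$, which produces the tight bound $\mathbb{P}(X \geq (1+\epsilon)\mu) \leq \big(e^{\epsilon}/(1+\epsilon)^{1+\epsilon}\big)^{\mu}$. The lower tail is treated symmetrically: applying Markov to $e^{-tX}$ with $t>0$, using the same factorization together with $\mathbb{E}[e^{-tX}] \leq \exp\!\big((e^{-t}-1)\mu\big)$, and optimizing at $t = -\ln(1-\epsilon)$, gives $\mathbb{P}(X \leq (1-\epsilon)\mu) \leq \big(e^{-\epsilon}/(1-\epsilon)^{1-\epsilon}\big)^{\mu}$.

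The final and main step is to relax these tight bounds into the claimed clean form, and I expect the upper tail to be the real obstacle because the constant $1/3$ leaves little slack. After raising to the power $\mu \geq 0$ (which preserves the inequality direction), it suffices to prove the two scalar inequalities $\epsilon - (1+\epsilon)\ln(1+\epsilon) \leq -\epsilon^2/3$ and $-\epsilon - (1-\epsilon)\ln(1-\epsilon) \leq -\epsilon^2/2 \leq -\epsilon^2/3$ on $(0,1)$. For the lower tail this is easy: setting $h(\epsilon) = -\epsilon - (1-\epsilon)\ln(1-\epsilon) + \epsilon^2/2$ gives $h(0)=0$ and $h'(\epsilon) = \epsilon + \ln(1-\epsilon) \leq 0$ (using $\ln(1-\epsilon) \leq -\epsilon$), so $h \leq 0$. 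The upper tail is tighter: with $f(\epsilon) = \epsilon - (1+\epsilon)\ln(1+\epsilon) + \epsilon^2/3$ one has $f(0)=0$ and $f'(\epsilon) = \tfrac{2}{3}\epsilon - \ln(1+\epsilon)$, so the crux is establishing $\ln(1+\epsilon) \geq \tfrac{2}{3}\epsilon$ on $(0,1)$; this I would verify by noting $g(\epsilon) = \ln(1+\epsilon) - \tfrac{2}{3}\epsilon$ has $g(0)=0$, rises while $g'(\epsilon) = \tfrac{1}{1+\epsilon} - \tfrac{2}{3} > 0$ for $\epsilon < \tfrac12$, and then decreases only to $g(1) = \ln 2 - \tfrac{2}{3} > 0$, so $g \geq 0$ throughout. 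Hence $f' \leq 0$, $f$ is non-increasing, and $f(\epsilon) \leq f(0) = 0$, which is exactly the desired inequality and completes both bounds.
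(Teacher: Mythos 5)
Your proposal is correct, but there is nothing in the paper to compare it against: the paper states this lemma purely as an imported classical tool, citing Chernoff (1952), and never proves it (likewise, the generalized bounded-variable version in the appendix, Section~\ref{sec: generalized chernoff bounds}, is stated without derivation and used as a black box). Your argument is the canonical route --- Markov's inequality applied to $e^{tX}$, factorization of the moment generating function by independence, the bound $1+p_i(e^t-1)\leq e^{p_i(e^t-1)}$, optimization at $t=\ln(1+\epsilon)$ (resp.\ $t=-\ln(1-\epsilon)$), then a scalar-calculus relaxation --- and every step checks out. In particular you correctly identify the one delicate point: the constant $\tfrac13$ in the upper tail is nearly tight at $\epsilon=1$, and your verification that $g(\epsilon)=\ln(1+\epsilon)-\tfrac23\epsilon$ is unimodal with $g(0)=0$ and $g(1)=\ln 2-\tfrac23>0$ (so $g\geq 0$ on $(0,1)$, whence $f'\leq 0$ and $f\leq f(0)=0$) is exactly what is needed; the derivative computations $f'(\epsilon)=\tfrac23\epsilon-\ln(1+\epsilon)$ and $h'(\epsilon)=\epsilon+\ln(1-\epsilon)$ are both right, and the lower tail indeed admits the stronger constant $\tfrac12$ before being relaxed to $\tfrac13$ for the symmetric statement. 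One cosmetic note: the paper's lemma never defines $\mu$; your explicit $\mu=\mathbb{E}[X]=\sum_{i=1}^n p_i$ fixes that omission and is the intended reading.
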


It shows that with a high probability $1-\delta$, we have $X \leq (1+\epsilon)\mu$, or $X \geq (1-\epsilon)\mu$, where $\delta=e^{-\frac{\epsilon^2}{3}}$.

\paragraph{GCNs.}
A GCN network normally consists of multiple GCN layers.
Given an input graph $\mathcal{G}\left(\mathbf{X}, \mathbf{A}\right)$ with $L$ vertices,
the output of a normal GCN layer~\cite{kipf2017semi} is:
\begin{equation}
\label{eq:gcn_usual}
    \begin{aligned}
    \mathbf{X}^{'} &=\sigma (\mathbf{\tilde{D}}^{-\frac{1}{2}} \mathbf{\tilde{A}} \mathbf{\tilde{D}}^{-\frac{1}{2}} \mathbf{X}\mathbf{W}_{l}), \quad
    \textrm{where}\ \mathbf{\tilde{A}} &= \mathbf{A}+\mathbf{I},  \
    \mathbf{\tilde{D}}_{ii} = \sum_{j=1}^{L} \mathbf{\tilde{A}}_{i,j},
    \end{aligned}
\end{equation}
$\mathbf{X} \in \mathbb{R}^{L \times M}$ is input vertex features with $M$ dimensions, $\mathbf{A} \in \mathbb{R}^{L \times L}$ is their adjacency matrix; $\mathbf{I} \in \mathbb{R}^{L \times L}$ is an identity matrix, $\mathbf{\tilde{D}} \in \mathbb{R}^{L \times L}$ is the diagonal degree matrix of $\mathbf{\tilde{A}}$; $\mathbf{W}_{l} \in \mathbb{R}^{M \times M'}$ is a trainable matrix, and $\sigma (\cdot)$ is an activation function such as ReLU~\cite{glorot2011deep}; $\mathbf{X}^{'} \in \mathbb{R}^{L \times M'}$ is the output vertex features with $M'$ dimensions, $M'=M$ in this work.

\subsection{Multiple tests on similarity measure}
\label{section: multiple_tests_similarity_metric}

Let $ \mathcal{V} = \left\{v_1,v_2,...,v_N \right\}$ be the collection of $N$ nodes.
Each node is assumed to be in one of two categories: $+1$ or $-1$ for simplicity.
For each node $v_i$, the subset of candidate nodes to connect it is denoted by $\mathcal{V}_i$.
This subset can be decided by a simple criterion such as by $k$ nearest neighbours.
When calculating the proposed similarity between $v_i$ and $v_j \in \mathcal{V}_i$, the common candidate nodes between $v_i$ and $v_j$, i.e., $\mathcal{V}_{i,j} = \mathcal{V}_i \cap \mathcal{V}_j$ are first identified.
The nodes in the $\mathcal{V}_{i,j}$ are treated as the \textbf{multiple samples} drawn for the comparison of $v_i$ and $v_j$.
Then, each $v_k \in \mathcal{V}_{i,j}$ performs single test twice, one for $v_k$ and $v_i$ and one for $v_k$ and $v_j$.
The \textbf{single test} here is a comparison using the pairwise similarity based on the features of $v_k$ and $v_i$, or $v_k$ and $v_j$. The pairwise similarity can be cosine similarity, Gaussian kernel similarity or any other form, and is denoted by
\textbf{Sim-S}.
The proposed similarity score between $v_i$ and $v_j$ increases by one if the two tests yield $C(v_i) = C(v_k)$ and $C(v_k) = C(v_j)$, and zero otherwise, where $C(v)$ is the category of node $v$.
It contains multiple single tests, hence the name \textbf{multiple tests}, corresponding similarity measure denoted by \textbf{Sim-M}.

For the convenience of analysis, a few assumptions are further introduced.
First, the size of $\mathcal{V}_{i,j}$ is $m$, and among the nodes in $\mathcal{V}_{i,j}$ , $\alpha m$ nodes share the same category as $v_i$ and $\left( 1- \alpha \right) m$ nodes belong to the opposite category, with $\alpha > \frac{1}{2}$.
For any node $v_j \in \mathcal{V}_i$, its probability of being connected with $v_i$, according to the statistical test, is $q \in \left[0,1\right]$ if $C(v_i) \neq C(v_j)$, and the probability increases to $1 \geq  p > q $ if $C(v_i) = C(v_j)$.
This section will show that, if $m$ is sufficiently large (i.e., enough number of tests can be run), there will exist an appropriate threshold for the {Sim-M} such that the number of error edges can be reduced significantly compared to {Sim-S}.

\begin{prop}
    \textit{For the single test method, in expectation, the number of noisy edges  $ \mathbb{E} \{ N_{\mathrm{noisy}} \} = \left( 1- \alpha \right) qm $ and the number of missing edges $ \mathbb{E} \{ N_{\mathrm{miss}} \} = \alpha \left( 1-p \right) m$.}
\end{prop}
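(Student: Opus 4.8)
The plan is to count edges by linearity of expectation over the $m$ nodes of $\mathcal{V}_{i,j}$, treating each single-test decision as a Bernoulli trial governed by the assumed connection probabilities $p$ and $q$. First I would partition $\mathcal{V}_{i,j}$ into the $\alpha m$ nodes that share the category of $v_i$ and the $(1-\alpha)m$ nodes of the opposite category, since these two groups are exactly the sources of the two distinct error types under consideration.

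Next, for each opposite-category node $v_k$, I would introduce an indicator $Y_k$ equal to $1$ when the single test connects $v_i$ and $v_k$ and $0$ otherwise. By the stated assumption $\mathbb{P}(Y_k=1)=q$, and any such edge connects two nodes of different categories, hence is a noisy edge by definition. Summing over the $(1-\alpha)m$ opposite-category nodes and applying linearity of expectation yields $\mathbb{E}\{N_{\mathrm{noisy}}\} = \sum_k \mathbb{E}[Y_k] = (1-\alpha)\,q\,m$.

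Symmetrically, for each same-category node $v_k$ I would introduce an indicator $Z_k$ equal to $1$ when the single test \emph{fails} to connect $v_i$ and $v_k$. Since a same-category pair is connected with probability $p$, we have $\mathbb{P}(Z_k=1)=1-p$, and each such event leaves two same-category nodes disconnected, i.e., a missing edge. Summing over the $\alpha m$ same-category nodes gives $\mathbb{E}\{N_{\mathrm{miss}}\} = \alpha\,(1-p)\,m$.

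I do not expect a substantial obstacle: the claim follows directly from linearity of expectation, which requires no independence among the trials, so the argument does not even invoke the independence of the samples. The only point deserving care is correctly matching each error type to its governing probability and group size — a noisy edge is a spurious connection among the $(1-\alpha)m$ opposite-category pairs (each occurring with probability $q$), whereas a missing edge is an absent connection among the $\alpha m$ same-category pairs (each occurring with probability $1-p$) — after which the two expectations follow immediately.
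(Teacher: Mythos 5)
Your proof is correct: the paper states this proposition without proof, treating it as an immediate consequence of the assumptions, and your argument — partitioning $\mathcal{V}_{i,j}$ into the $\alpha m$ same-category and $(1-\alpha)m$ opposite-category nodes, then applying linearity of expectation to the indicator of each single-test outcome — is exactly the intended reasoning. Your remark that independence is not needed for this step (only for the Chernoff bound in the subsequent theorem) is a correct and worthwhile observation.
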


The theorem below shows that both $\mathbb{E} \{ N_{\mathrm{noisy}} \} $ and $\mathbb{E} \{ N_{\mathrm{miss}} \} $ can be reduced by a significant portion when $m$ is large enough.

\begin{theorem}
    \label{thm:m-lower-bound}
    Suppose $\gamma > 1$ and
    \begin{equation}
        \label{equation:theorem_mt}
    m > \frac{3((p+q)^2 + (2\alpha-1) (p^2-q^2))^2}{pq((p-q)^2 + (2\alpha-1) (p^2-q^2))^2} \log(\frac{\gamma}{\min ((1-\alpha)q, \alpha(1-p))}),
    \end{equation}
    We have
    $ \mathbb{E} \{ N_{\mathrm{noisy}} \} = \left( 1- \alpha \right) qm / \gamma $ and $\mathbb{E} \{ N_{\mathrm{miss}} \} = \alpha \left(1 - p \right)m / \gamma $ .
\end{theorem}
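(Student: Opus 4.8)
The plan is to realize the \textbf{Sim-M} score between a pair $(v_i,v_j)$ as a sum of independent Bernoulli variables and then control two tails with essentially one Chernoff application. Write $S=\mathrm{Sim\text{-}M}(v_i,v_j)=\sum_{k=1}^m Z_k$, where $Z_k$ is the indicator that both single tests attached to the common neighbour $v_k$ fire. Assuming the two single tests on each $v_k$ are independent, and that tests across distinct common neighbours are independent, each $Z_k\in\{0,1\}$ has a success probability depending only on the categories involved: when $C(v_i)=C(v_j)$ a same-category $v_k$ fires with probability $p^2$ and an opposite one with $q^2$, whereas when $C(v_i)\neq C(v_j)$ every $v_k$ fires with probability $pq$. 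Summing over the $\alpha m$ same-category and $(1-\alpha)m$ opposite-category neighbours gives the two regime means $\mu_{\mathrm{same}}=m\,a$ with $a:=\alpha p^2+(1-\alpha)q^2$, and $\mu_{\mathrm{diff}}=m\,b$ with $b:=pq$. First I would record that $a>b$ for every $\alpha\ge\tfrac12$ (the gap equals $\tfrac12(p-q)^2>0$ at $\alpha=\tfrac12$ and is increasing in $\alpha$), so the two regimes are genuinely separated.

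Next I would fix the rule ``create the edge iff $S\ge\tau$'' and choose the single threshold $\tau$ so that both tails carry the \emph{same} relative deviation. Setting $\epsilon:=(a-b)/(a+b)\in(0,1)$ and $\tau:=(1+\epsilon)\mu_{\mathrm{diff}}=(1-\epsilon)\mu_{\mathrm{same}}=\tfrac{2ab}{a+b}m$ (one verifies the two expressions for $\tau$ coincide), Lemma~\ref{thm:chernoff} applied to the upper tail in the different-category regime and to the lower tail in the same-category regime yields
\begin{equation*}
P_{\mathrm{noisy}}:=\mathbb{P}(S\ge\tau\mid\text{diff})\le e^{-\epsilon^2\mu_{\mathrm{diff}}/3},\qquad P_{\mathrm{miss}}:=\mathbb{P}(S<\tau\mid\text{same})\le e^{-\epsilon^2\mu_{\mathrm{same}}/3}.
\end{equation*}
Since $\mu_{\mathrm{same}}>\mu_{\mathrm{diff}}$, the noisy bound is the weaker of the two, so both probabilities are at most $e^{-\epsilon^2\mu_{\mathrm{diff}}/3}$.

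It then remains to force this common bound below $\delta_0:=\min\!\big((1-\alpha)q,\ \alpha(1-p)\big)/\gamma$. Taking logarithms, $e^{-\epsilon^2\mu_{\mathrm{diff}}/3}\le\delta_0$ is equivalent to $m\ge\frac{3(a+b)^2}{b(a-b)^2}\log\!\big(\gamma/\min((1-\alpha)q,\alpha(1-p))\big)$ after substituting $\epsilon^2=(a-b)^2/(a+b)^2$ and $\mu_{\mathrm{diff}}=mb$. The remaining work is purely algebraic: checking the identities $(p+q)^2+(2\alpha-1)(p^2-q^2)=2(a+b)$ and $(p-q)^2+(2\alpha-1)(p^2-q^2)=2(a-b)$, together with $b=pq$, turns this inequality into exactly hypothesis~\eqref{equation:theorem_mt} (the factors of $2$ cancel in the ratio). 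Finally, multiplying the per-edge error probabilities by the relevant populations gives $\mathbb{E}\{N_{\mathrm{noisy}}\}=(1-\alpha)m\,P_{\mathrm{noisy}}\le(1-\alpha)m\,\delta_0\le(1-\alpha)qm/\gamma$ and $\mathbb{E}\{N_{\mathrm{miss}}\}=\alpha m\,P_{\mathrm{miss}}\le\alpha m\,\delta_0\le\alpha(1-p)m/\gamma$, i.e. both error counts are reduced by a factor of at least $\gamma$ relative to the single-test Proposition.

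The step I expect to be the real crux is the threshold design: a naive midpoint or a per-tail-optimised threshold leaves two incomparable conditions on $m$, and it is the specific balanced choice $\epsilon=(a-b)/(a+b)$, making both tails share one deviation, that collapses the analysis to the single clean bound. Recognising that $(p\pm q)^2+(2\alpha-1)(p^2-q^2)$ are exactly $2(a\pm b)$ is what makes the resulting condition coincide with~\eqref{equation:theorem_mt}; everything after that is routine substitution.
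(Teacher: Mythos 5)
Your proposal is correct and takes essentially the same route as the paper: both realize Sim-M as a sum of independent Bernoulli tests with means $pq$ (different categories) and $\alpha p^2+(1-\alpha)q^2$ (same category), apply the Chernoff bounds of Lemma~\ref{thm:chernoff} to the two tails, derive the condition on $m$ from requiring the two bounds not to overlap, and substitute $\delta=\min((1-\alpha)q,\alpha(1-p))/\gamma$. The only cosmetic difference is the threshold: you use a single balanced deviation $\epsilon=(a-b)/(a+b)$ so both tails share one bound, while the paper uses two mean-dependent deviations and takes the midpoint of the resulting bounds; the non-overlap requirement (driven by the larger, different-category deviation) and the final inequality on $m$ come out identical.
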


\begin{proof}
    Consider node $v_i$ and one of its candidate node $v_j \in \mathcal{V}_i$.
    $S_{i,j}^k$ is denoted as the similarity score between $v_i$ and $v_j$ based on the statistical test with respect to $v_k$.
    As shown in Figure~\ref{fig:multiple_tests}, $v_k$ may be of the same category as $v_i$ or not~(denoted by ${v'}_k$ for a better show).
    Evidently, $\mathbb{E}[S_{i,j}^k] = pq$ if $C(v_i) \neq C(v_j)$, and $\mathbb{E}[S_{i,j}^k] = \alpha p^2 + (1-\alpha )q^2$ if $C(v_i) = C(v_j)$ .

    The {Sim-M} between $v_i$ and $v_j$ is defined as:
    \begin{equation}
        \label{equation: sim-m equ}
        S_{i,j} = \frac{1}{m}\sum_{k \in \mathcal{V}_{i,j}} S_{i,j}^{k}.
    \end{equation}
    Using the Chernoff Bounds~\cite{chernoff1952measure}, we have with a probability $1-\delta$,
    \begin{equation}
        \left\{
        \begin{array}{ll}
            S_{i,j}=\frac{1}{m}\sum_{k \in V_{i,j}}S_k \leq (1+\sqrt{\frac{3log(1/\delta)}{pqm}})pq, &C(v_i) \neq C(v_j), \\
            S_{i,j}=\frac{1}{m}\sum_{k \in V_{i,j}}S_k \geq (1-\sqrt{\frac{3log(1/\delta)}{(\alpha p^2+(1-\alpha) q^2)m}})(\alpha p^2+(1-\alpha) q^2), &C(v_i) = C(v_j),
        \end{array}
        \right.
    \end{equation}

    By assuming that $m$ is large enough, i.e.,
    \begin{equation}
        \label{equ:unequal raw}
        \sqrt\frac{3\log{(1/\delta)}}{pqm}  < \frac{(p-q)^2 + (2\alpha-1) (p^2-q^2) }{(p+q)^2 + (2\alpha-1)(p^2-q^2)},
    \end{equation}
    by choosing the similarity threshold
    \begin{equation}
        S_t = (1+\sqrt{\frac{3log(1/\delta)}{pqm}})\frac{pq}{2} + (1-\sqrt{\frac{3log(1/\delta)}{(\alpha p^2+(1-\alpha) q^2)m}}) \frac{\alpha p^2+(1-\alpha) q^2}{2},
    \end{equation}
    only with a probability $\delta$, we will miss a correct edge, and with a probability $\delta$ we will introduce a noisy edge. We complete the proof by substituting $\delta$ in Equation~\ref{equ:unequal raw} with
    \begin{equation}
        \delta \triangleq \frac{1}{\gamma} \min ((1-\alpha)q, \alpha(1-p)).
    \end{equation}
    We can guarantee that the {Sim-M} between two nodes from the same category is strictly larger than that for two nodes from different categories, leading to the distillation of all noisy edges.
\end{proof}

\subsection{Multiple tests on GCNs}
\label{section: multiple_tests_on_GNNs}

\begin{figure}[t]
    \centering
    \begin{minipage}[t]{0.35\linewidth}
        \centering
        \includegraphics[width=0.98\linewidth]{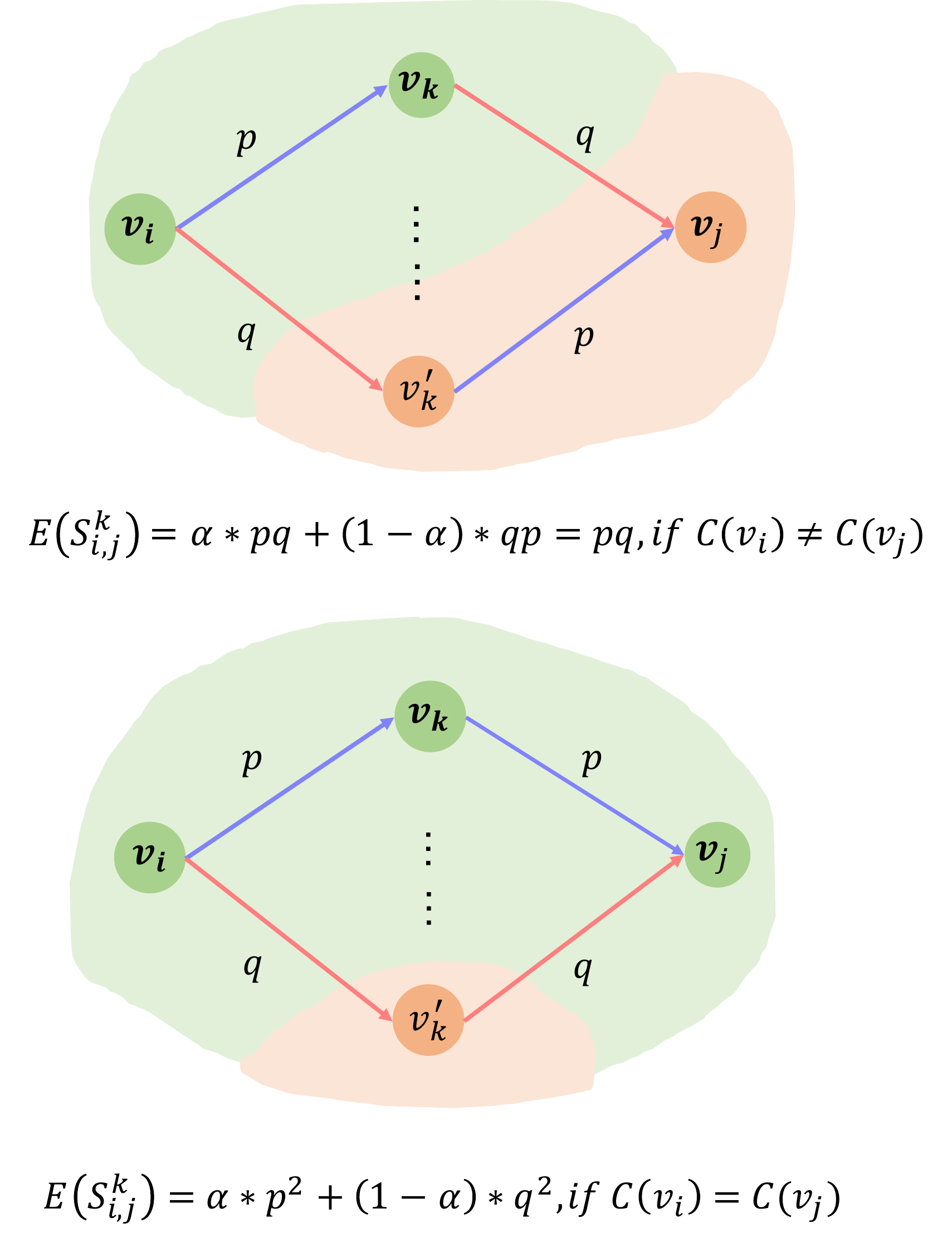}
        \caption{Illustration of $\mathbb{E}[S_{i,j}^k]$ in Sim-M on two nodes from different categories and the same category.
        Nodes of different colors are from different categories.}
        \label{fig:multiple_tests}
    \end{minipage}
    \hfill
    \begin{minipage}[t]{0.6\linewidth}
        \centering
        \includegraphics[width=0.95\linewidth]{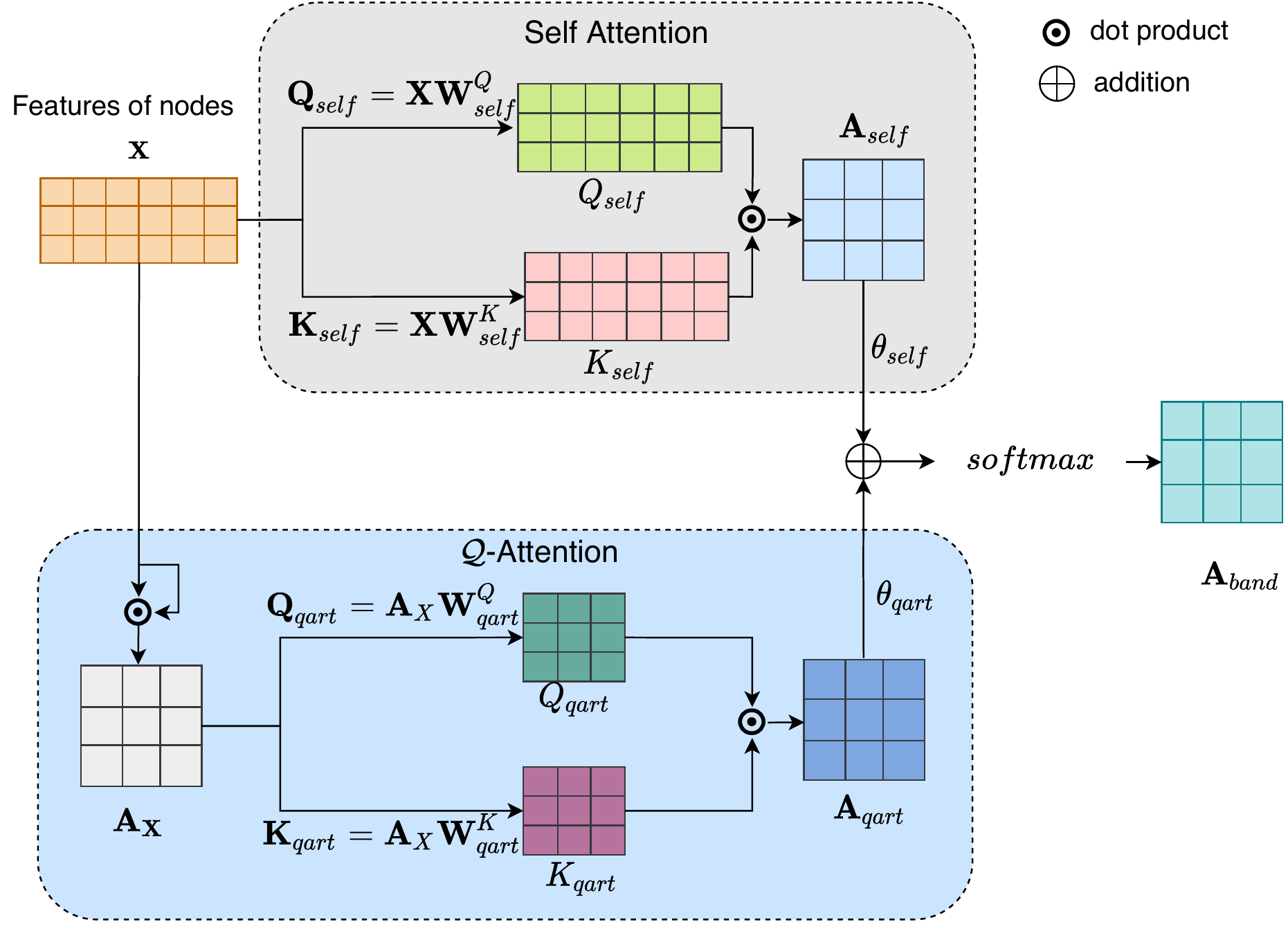}
        \caption{
        Illustration of the $\mathcal{B}$\textbf{-Attention} mechanism.
        The self-attention part is the same as that in Transformers.
        The $\mathcal{Q}$\textbf{-Attention} part generates $\mathbf{A}_{X}$ and then pay attention to it to generate the $\mathbf{A}_{qart}$.
        The two output attention maps are fused as the final output $\mathbf{A}_{band}$.}
        \label{fig:qttention}
    \end{minipage}
\end{figure}

With a better similarity measure prototype above, we further craft it into an elegant and efficient matrix form for adaptation of graph structure learning in the real-world.
The analysis in \textbf{Theorem}~\ref{thm:m-lower-bound} can be easily extended to real similarities, instead of binary simiarities, by using the generalized version of Chernoff inequality detailed in Section~\ref{sec: generalized chernoff bounds} of Appendix.
Given the L2 normalized original node features $\mathbf{X}$, the idea of Sim-S can be implemented by inner product of features, i.e. $\mathbf{A}_{X} = \mathbf{X} \mathbf{X}^{T}$.
The twice-run tests on each node can be achieved by $\mathbf{A}_{X}$ multiplying itself.
Inspired by the design of popular Transformers~\cite{vaswani2017attention}, some learnable parameters are also introduced.
It contains the \textbf{fourth-order statistics of features}, which is a significant difference, since self-attention only has second-order.
If self-attention is regarded as \textbf{Duet of Attention}, our method is analogous to \textbf{Quertet of Attention}~($\mathcal{Q}$\textbf{-Attention} for short).
The vectorized representation of Equation~\ref{equation: sim-m equ},
$\mathcal{Q}$\textbf{-Attention}  $\mathbf{A}_{qart} \in \mathbb{R}^{L \times L}$, is then implemented as follows:
\begin{equation}
    \label{eq:ctx-attention}
    \begin{aligned}
        \mathbf{A}_{qart} = \mathbf{Q}_{qart} \mathbf{K}^{T}_{qart}, \quad
        \textrm{where} \
        \mathbf{Q}_{qart} = \mathbf{A}_{X}\mathbf{W}^Q_{qart},\
        \mathbf{K}_{qart} = \mathbf{A}_{X}\mathbf{W}^K_{qart},\\
    \end{aligned}
\end{equation}
$\mathbf{W}^Q_{qart} \in \mathbb{R}^{L \times L'}$ and $\mathbf{W}^K_{qart} \in \mathbb{R}^{L \times L'}$ are learnable weights, $L' = L$ in this work.
L2 normalization is also applied to $\mathbf{A}_{X}$.

Motivated by the further enhancements from combinations of intrinsic graph structures and implicit graph structures in literature~\cite{li2018adaptive,chen2020iterative,liu2020retrieval}, the self-attention can play an intrinsic role to fuse with the $\mathcal{Q}$\textbf{-Attention} in vision tasks, although no intrinsic graph structure exists between images at all.
The \textbf{self-attention} part is the same as that in Transformers~\cite{vaswani2017attention}.
The attention map $\mathbf{A}_{self} \in \mathbb{R}^{L \times L}$ is:
\begin{equation}
    \label{eq:self-attention}
    \begin{aligned}
        \mathbf{A}_{self} = \frac{ \mathbf{Q}_{self} {\mathbf{K}_{self}}^{T}}
        { \sqrt{M^d} }, \quad
        \textrm{where} \ \mathbf{Q}_{self} = \mathbf{X}\mathbf{W}^Q_{self}, \
        \mathbf{K}_{self} = \mathbf{X}\mathbf{W}^K_{self}, \\
    \end{aligned}
\end{equation}
$\mathbf{W}^Q_{self} \in \mathbb{R}^{M \times M^d}$
and $\mathbf{W}^K_{self} \in \mathbb{R}^{M \times M^d}$
are the learnable weights, $M^d$ is the dimension of query and key features.
The combined form blends the duet and quartet of attention, hence the name \textbf{Band of Attention}~($\mathcal{B}$\textbf{-Attention} for short).
The overview of $\mathcal{B}$\textbf{-Attention} mechanism is shown in Figure~\ref{fig:qttention}. 

The fusion of $\mathbf{A}_{self}$ and $\mathbf{A}_{qart}$ is defined as (where $\textrm{softmax}(\cdot)$ is a softmax function):
\begin{equation}
\label{eq:hybrid-attention}
\mathbf{A}_{band} = \textrm{softmax}(\theta_{qart} \mathbf{A}_{qart} + \theta_{self} \mathbf{A}_{self}).
\end{equation}
When combined with the neural network layer in Equation~\ref{eq:gcn_usual}, the final form is:
\begin{equation}
    \label{eq:hybrid-attention}
    \mathbf{X'} = \sigma(\mathbf{A}_{band}\mathbf{X}\mathbf{W}_{l}).
\end{equation}

{
In real implementation, considering the graph size can be very large for large-scale datasets, sub-graph sampling strategy~\cite{wang2022adanets,wang2019dgl} is adopted to make it more computationally efficient and scalable.
In particular, $k_{seed}$ nodes are first selected as seeds from a dataset in each sampling step.
Then the seed nodes together with their $k$ nearest neighbours~($k$NN)~\cite{cover1967nearest} form the nodes of a sub-graph.
As a result, Equations~\ref{eq:ctx-attention} and~\ref{eq:self-attention} are applied to the sub-graph during training and inference.
For a dataset of size $N_{d}$, it takes about $N_{d}/k_{seed}$ iterations to process the whole dataset on average.
The pseudo code for self-attention and $\mathcal{Q}$\textbf{-Attention} is detailed in Section~\ref{sec: pseudocode} of Appendix.
}

\section{Experiments}
\label{section: experiment}
Experiments are designed with three levels.
Firstly, experiments are conducted to analyse the advantage of {Sim-M} over that of  {Sim-S} and its robustness to noise.
Secondly, armed with multiple tests on GCNs, i.e., $\mathcal{B}$\textbf{-Attention} is further investigated in comparison with the self-attention and a commonly used form of GCNs on robustness and superiority.
Ablation experiments are then conducted to study how each part of $\mathcal{B}$\textbf{-Attention} contributes to and influences the performance.
Finally, with the robust graph structure learned by $\mathcal{B}$\textbf{-Attention},
comparison experiments in downstream graph-based clustering and ReID tasks are conducted to evaluate the benefits of the proposed approach.

\subsection{Experimental setups}
\label{section:Experimental setups}
\paragraph{Datasets}
Experiments are conducted on commonly used visual datasets\footnote{\textbf{DECLARATION:} Considering the ethical and privacy issues, all the datasets used in this paper are feature vectors which can not be restored to images.} with three different types of objects, i.e., \textbf{MS-Celeb}~\cite{guo2016ms}~(human faces), \textbf{MSMT17}~\cite{wei2018person}~(human bodies) and \textbf{VeRi-776}~\cite{7553002}~(vehicles), to verify the generalization of the proposed method.
In these datasets, each sample has a specific category, and most categories have rich samples.
The tasks on these datasets are popular and representative in computer vision.
It is worth noting that {MS-Celeb} is divided into 10 parts by identities (Part0-9): Part0 for training and Part1-9 for testing, to maintain comparability and consistency with previous works~\cite{yang2019learning,yang2019learning,guo2020density,wang2022adanets,nguyen2021clusformer,shen2021starfc}.
\textbf{Details of the dataset} are in Section~\ref{appendix: detail of dataset}.

\paragraph{Metrics}
The Area Under the Curve (\textbf{AUC}) is adopted to directly evaluate the discriminatory power of similarity measures between node pairs.
Edge Noise Rate~(\textbf{ENR}) illustrates the amount of noise in graphs quantitatively.
For node $v_i$, $\mathbf{ENR}=\frac{N^{noise}_i}{N^{all}_i}$, where $N^{noise}_i$ is the number of noisy edges and $N^{all}_i$ is the amount of all edges for $v_i$.
The average {ENR} is defined as the average of the {ENRs} of all nodes.
The feature quality of GCNs output is investigated to prove the effectiveness of graph structure learning:
The mean Average Precision (\textbf{mAP})~\cite{zhu2004recall} is used from the perspective of identification and \textbf{ROC} curves are shown from the perspective of verification.
The higher the quality of the features the better the graph structure learning.
In downstream ReID tasks, \textbf{mAP}$\mathbf{^R}$ and Rank1 accuracy (\textbf{R1}) are used. Compared to {mAP}, {mAP}$\mathbf{^R}$
additionally takes into account the camera id information, i.e., samples from the same camera as the probe will not be counted in evaluation. \textbf{BCubed}~($\mathbf{F_B}$)~\cite{bagga1998algorithms,amigo2009comparison} and \textbf{Pairwise}~($\mathbf{F_P}$)~\cite{shi2018face} F-scores are used for evaluating clustering tasks.

\subsection{Effectiveness of the {Sim-M}}
\label{sec: effectiveness of sim-m}

\begin{wraptable}{r}{6cm}
    \vspace{-3mm}
    \centering
    \caption{Average {ENRs} of different $k$ values and the
    {AUC} of {Sim-S} ({AUC}$_\mathbf{S}$), {Sim-M} ({AUC}$_\mathbf{M}$) and their difference ({AUC}$_\mathbf{\delta}$={AUC}$_\mathbf{M}$-{AUC}$_\mathbf{S}$) on {MS-Celeb}~(part1).
    The values of {AUC} are scaled by a factor of 100 for better display.
    }
    \small
    \label{tab:enrs_vs_k}
        \begin{tabular}{lcccc}
            \toprule
            $k$   & {ENR}  & {AUC}$_\mathbf{S}$   & {AUC}$_\mathbf{M}$    & {AUC}$_\mathbf{\delta}$    \\
            \midrule
            5   & 0.05 & 96.80 & 97.59 & 0.79 \\
            10  & 0.09 & 96.21 & 97.72 & 1.51  \\
            20  & 0.14 & 95.33 & 97.21 & 1.88  \\
            40  & 0.23 & 94.00 & 95.66 & 1.66  \\
            80  & 0.39 & 92.05 & 91.32 & -0.71  \\
            120 & 0.54 & 90.76 & 86.58 & -4.17 \\
            160 & 0.64 & 90.65 & 83.90 & -6.75 \\
            \bottomrule
        \end{tabular}
\end{wraptable}

To verify the robustness of {Sim-M}, the study is first conducted on {MS-Celeb}~(Part1) by evaluating the discriminatory ability on different average {ENRs} comparing with {Sim-S}.
The {Sim-S} uses the cosine similarity here for simplicity and without loss of generality.
For the sake of sparsity and efficiency it brings,
the graphs are built based on $k$NN via the approximate nearest neighbour algorithm.
The average {ENR} of graphs is controlled by using different $k$ values when searching neighbours.
{AUC} is adopted to evaluate the similarity score between node pairs which have $k$NN relations.
Table~\ref{tab:enrs_vs_k} presents that as the $k$ increases, more and more noise is added~({ENR} gets larger).
For {Sim-M}, however, more samples can be drawn and more tests can be run, so the advantage of {Sim-M} over {Sim-S} becomes increasingly obvious.
This demonstrates the \textbf{robustness} of Sim-M to noise.
This phenomenon also conforms to the condition and conclusion in \textbf{Theorem}~\ref{thm:m-lower-bound}.
However, as $k$ increases and {ENR} gets larger,
too much noise can not be handled by multiple tests,
so the performance of {Sim-M} begins to drop.
This downward trend can be slowed down by the careful design of $\mathcal{B}$\textbf{-Attention}.
The same experiments are conducted on the {MSMT17} and {VeRi-776}, with similar findings in Section~\ref{appendix: sim-m on msmt veri}.

\subsection{Effectiveness of the $\mathcal{B}$\textbf{-Attention}}
\label{sec:effectiveness}

To demonstrate the robustness of $\mathcal{B}$\textbf{-Attention} mechanism,
the study is first conducted on {MS-Celeb}~(Part1) by evaluating the performance of $\mathcal{B}$\textbf{-Attention} with different average {ENRs}, comparing it to other baselines. Then, comparison experiments are also conducted on {MSMT17}, {VeRi-776} and also larger scales (Part3-9) of {MS-Celeb} with the best {ENR} settings for each dataset. Comparison baselines include the original features of each dataset, a commonly used form of GCN~\cite{kipf2017semi,wang2022adanets},
Transformer, GCN with the self-attention (GCN with $\mathbf{A}_{self}$),
Transformer with the $\mathcal{B}$\textbf{-Attention} mechanism~(Trans with $\mathbf{A}_{band}$),
and GCN with $\mathcal{B}$\textbf{-Attention} mechanism~(GCN with $\mathbf{A}_{band}$).
Transformer here is only used as an encoder for structure learning, no positional encoding is required.

\paragraph{Network architectures and settings}
The model is composed of multiple GCN layers with $\mathcal{B}$\textbf{-Attention}, and an extra fully connected layer with PReLU~\cite{he2015delving} activation.
The output features have a dimension of 2048 for all datasets.
The number of GCN layers is also tuned respectively for different datasets: {MS-Celeb} uses three GCN layers; {MSMT17} and {VeRi-776} use one GCN layer. Each node acts as the probe to search its $k$NN to construct the graph.
The $k$ value is tuned respectively for different datasets: 120 for {MS-Celeb}, 30 for {MSMT17}, and 80 for {VeRi-776}.
Each node in a dataset can be enhanced multiple times, as either a probe or a neighbour.
The Hinge Loss~\cite{rosasco2004loss,wang2022adanets} is used to classify node pairs.
The initial learning rate is 0.008 with the cosine annealing strategy.

\paragraph{Performance on different average {ENRs}}
\label{sec:performance_enrs}

Models are trained on Part0 and tested on Part1 of {MS-Celeb}.
All baselines use three GCN/Transformer layers for {MS-Celeb}, except the GCN with $\mathbf{A}_{self}$ which uses two GCN layers (observed to have better performance than three GCN layers).
For a certain {ENR}, the same $k$ value is used in training and testing.
Figure~\ref{fig:mAP_different_enrs} shows how the output feature qualities ({mAP}) change with different average {ENRs} for all baselines. It shows that the GCN and Transformer with $\mathbf{A}_{band}$ work the best in most average {ENRs} in terms of both {mAP} values and their robustness to different {ENRs}. The GCN has the worst results, and the performances of the ones with $\mathbf{A}_{self}$ are in between. This validates that a commonly used form of GCN is sensitive to noisy edges in input graphs;
the proposed $\mathcal{B}$\textbf{-Attention} can well alleviate the influence of noisy edges and show the strongest robustness when learning graph structure;
the  $\mathbf{A}_{self}$ also has some power of dealing with noisy edges, but it is worse than $\mathbf{A}_{band}$.
When $k$ value is too small (e.g., 5, 10), the performances of $\mathbf{A}_{band}$ and $\mathbf{A}_{self}$ are similar. This indicates that the $\mathcal{Q}$\textbf{-Attention} part of $\mathcal{B}$\textbf{-Attention} can hardly bring a positive effect when there are too few context neighbours,
i.e. the number of single tests gathered is insufficient.
In addition, GCN and Transformer have very similar performance, either with $\mathbf{A}_{band}$ or $\mathbf{A}_{self}$. This indicates that base networks are not critical, and both attention mechanisms work for the two base networks.
The performance degradation of $\mathcal{B}$\textbf{-Attention} with very large {ENR} (e.g., 0.64) indicates that the current form $\mathcal{B}$\textbf{-Attention} still has improvement room, although works much better than the self-attention.
Some empirical case studies on self attention, $\mathcal{Q}$\textbf{-Attention} and $\mathcal{B}$\textbf{-Attention} are shown in Section~\ref{sec: case study} of Appendix to better understand the effect of multiple tests.

\begin{minipage}[t]{\linewidth}
    \vspace{1mm}
    \begin{minipage}{0.4\textwidth}
        \centering
        \includegraphics[width=0.9\linewidth]{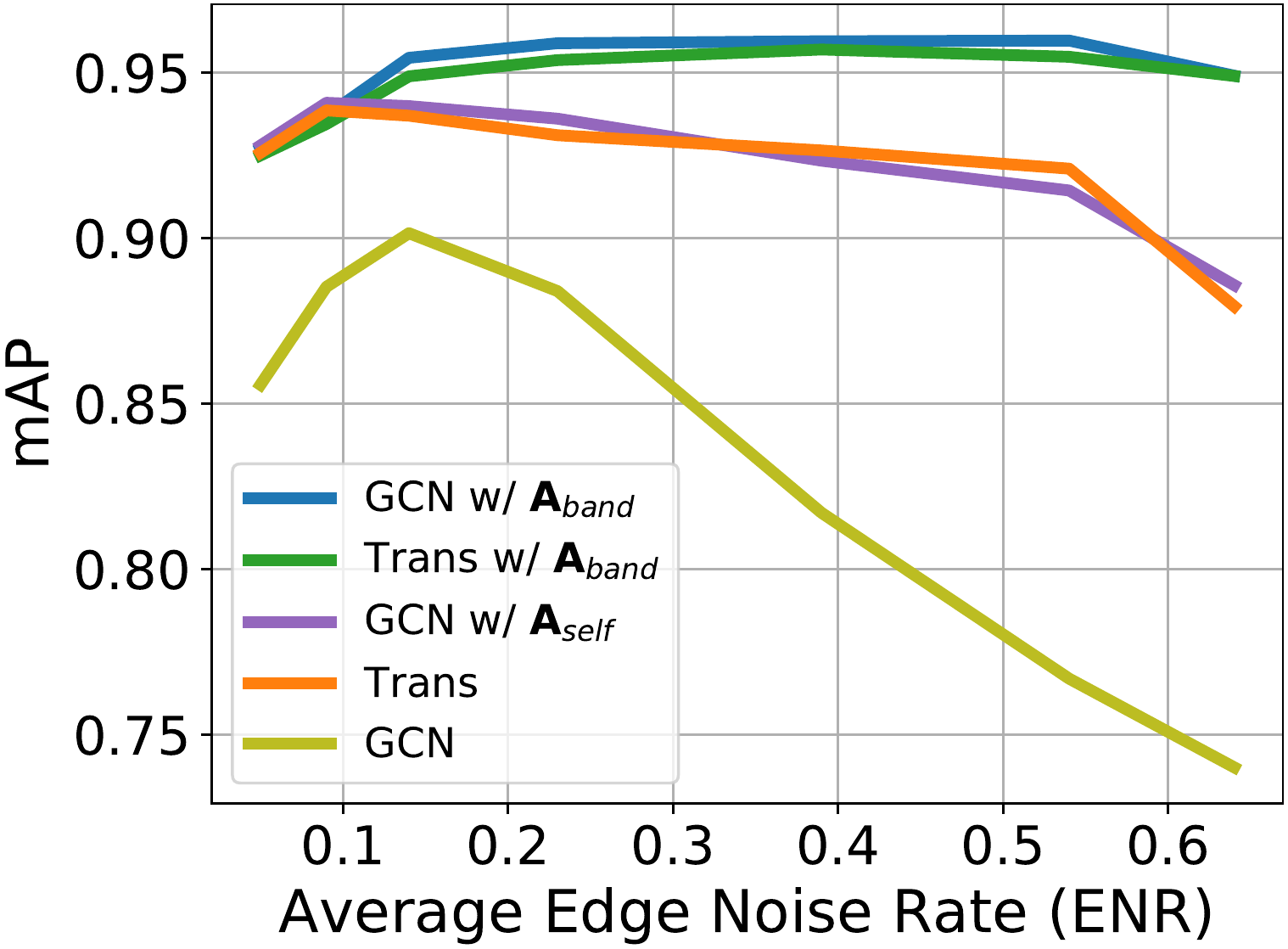}
        \captionof{figure}{Feature quality ({mAP}) of various methods with different average {ENRs} on {MS-Celeb} Part1.}
        \label{fig:mAP_different_enrs}
    \end{minipage}
    \hfill
    \begin{minipage}{0.55\textwidth}
        \centering
        \captionof{table}{Feature quality~({mAP}) of different methods on visual datasets of different types of objects.}
        \label{tab:mAP_different_objects}
        \resizebox{\textwidth}{!}{
            \begin{tabular}{lccc}
            \toprule
            Dataset & MS-Celeb & MSMT17 & VeRi-776 \\
            \midrule
                Original Feature & 80.07 & 74.46 & 81.91\\
                GCN & 90.15 & 83.51 & 85.27 \\
                Transformer & 93.86 & 85.44 & 85.92\\
                GCN w/ $\mathbf{A}_{self}$ & 94.09 & 85.40 & 86.88\\
                Trans w/ $\mathbf{A}_{band}$ & 95.70 & 85.92 & 87.07\\
                GCN w/ $\mathbf{A}_{band}$ & \textbf{95.97} & \textbf{86.04} & \textbf{87.59}\\
            \bottomrule
            \end{tabular}
        }
    \end{minipage}
    \vspace{-2mm}
\end{minipage}

\paragraph{Performance on different types of objects}
\label{sec:performance_different_objects}

To further validate the $\mathcal{B}$\textbf{-Attention} mechanism, comparisons are also made on datasets with other types of objects: {MSMT17} and {VeRi-776}.
In these two datasets, $k$ values and network depths are also \textbf{tuned} for all baselines, aiming for their best performances.
For both datasets, one GCN layer is used for GCN, GCN with $\mathbf{A}_{band}$, and Transformer with $\mathbf{A}_{band}$; three GCN layers are used for GCN or Transformer with $\mathbf{A}_{self}$.
In {MSMT17}, $k$=10 for GCN and ``GCN w/ $\mathbf{A}_{self}$''; $k$=20 for Transformer; $k$=30 for the two $\mathcal{B}$\textbf{-Attention} methods.
In {VeRi-776}, $k$=80 for ``GCN w/ $\mathbf{A}_{band}$'' and ``Trans w/ $\mathbf{A}_{band}$'', and $k$=20 for the rest of baselines.
The best {mAP} results of all baselines are listed in Table~\ref{tab:mAP_different_objects}, where the results for {MS-Celeb} are the peak values in Figure~\ref{fig:mAP_different_enrs}.
From Table~\ref{tab:mAP_different_objects}, the $\mathcal{B}$\textbf{-Attention} works the best in all three datasets; $\mathbf{A}_{self}$ works better than GCN; and all output features are much better than the original ones, although the original features of {MSMT17} and {VeRi-776} are current SOTA ones. The best performances are achieved by the GCN with $\mathbf{A}_{band}$ in all datasets.
Their {ROC} curves are shown in Figure~\ref{fig:roc_different_objects}, where the same observations can be seen from verification perspective.
As shown in Figure~\ref{fig:roc_different_objects} and Table~\ref{tab:mAP_different_objects}, the performance differences among those baselines with attention mechanisms are less significant in {MSMT17} where $k$ values are relatively small and close across baselines.
This observation is similar to that in Figure~\ref{fig:mAP_different_enrs}: the performance margin between $\mathbf{A}_{band}$ and $\mathbf{A}_{self}$ is less significant when average {ENRs}~(i.e., $k$) are relatively small. This further shows that the benefit of $\mathcal{B}$\textbf{-Attention} is limited when with a small number of neighbours,
as not many single tests to run and leverage.
Better performance can not be achieved by enlarging $k$ values for {MSMT17}, as its average sample amount for each individual is only 31.
Our analysis assumes that for any node $v$, there are more nodes in the $k$ nearest neighbors sharing the same category as $v$ than those sharing different categories (i.e. $\alpha>\frac{1}{2}$). By significantly enlarging $k$, we will end up with $\alpha<\frac{1}{2}$ in $k$ nearest neighbors, which fails the assumption of our algorithm.
Too large $k$ values would result in too big {ENR} which can not be handled by the current form of $\mathcal{B}$\textbf{-Attention} as discussed on Figure~\ref{fig:mAP_different_enrs}.

Experiments are also conducted to compare these baselines on \textbf{larger scales of MS-Celeb} (i.e., Part3-9) in Section~\ref{sec:performance_larger_scales} of Appendix.
Results in Table~\ref{tab:mAP_different_scales} and Figure~\ref{fig:roc_celeb_part1_9} show that all baselines have performance degradation when the scale enlarges while the degradation of that “w/ $\mathbf{A}_{band}$” is much smaller than baselines.
These results for the cases with different {ENRs}, on different types of objects and larger scales of testsets, together validate the robustness and effectiveness of proposed $\mathcal{B}$\textbf{-Attention} in graph structure learning.

\textbf{Ablation experiments} on the design of the topological architecture and attention fusion method are also conducted in Section~\ref{sec: ablation study} of Appendix.

\begin{figure}[t]
  \centering
  \begin{subfigure}{0.32\linewidth}
    \centering
    \includegraphics[width=1.0\linewidth]{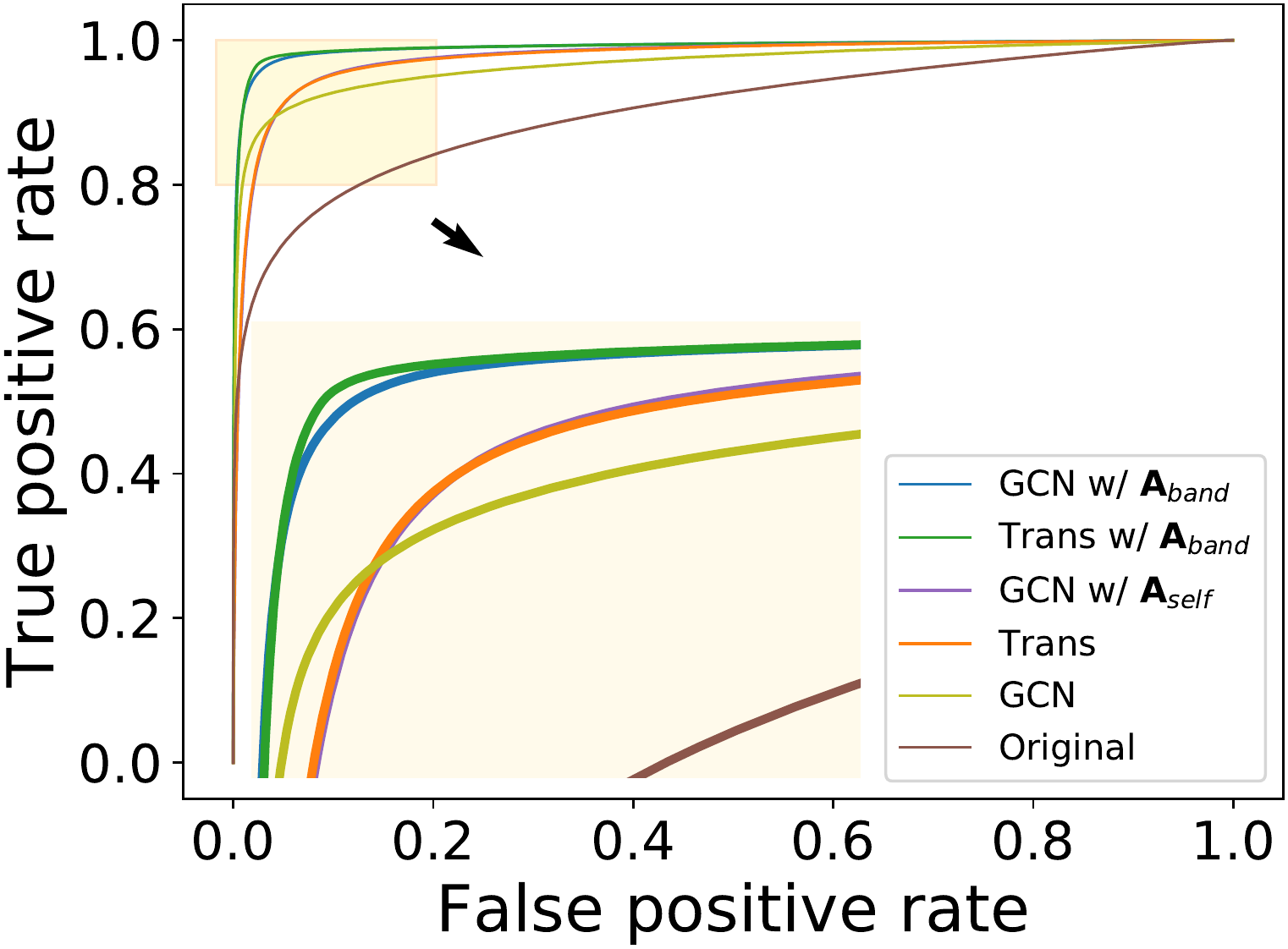}
    \caption{{MS-Celeb} Part1}
    \label{fig:effectiveness_celeb_part1}
  \end{subfigure}
  \begin{subfigure}{0.32\linewidth}
    \centering
    \includegraphics[width=1.0\linewidth]{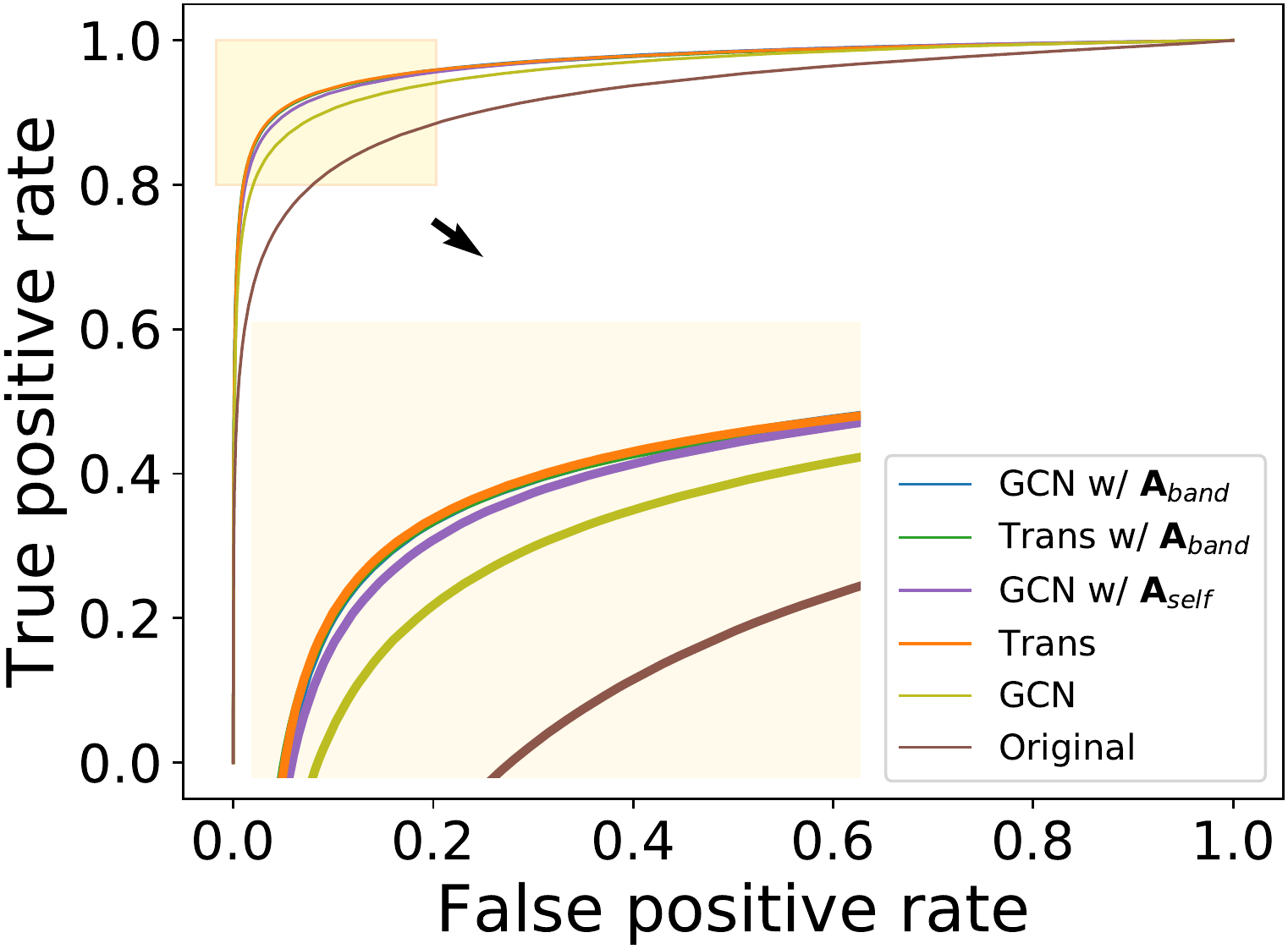}
    \caption{{MSMT17}}
    \label{fig:effectiveness_msmt}
  \end{subfigure}
  \begin{subfigure}{0.32\linewidth}
    \centering
    \includegraphics[width=1.0\linewidth]{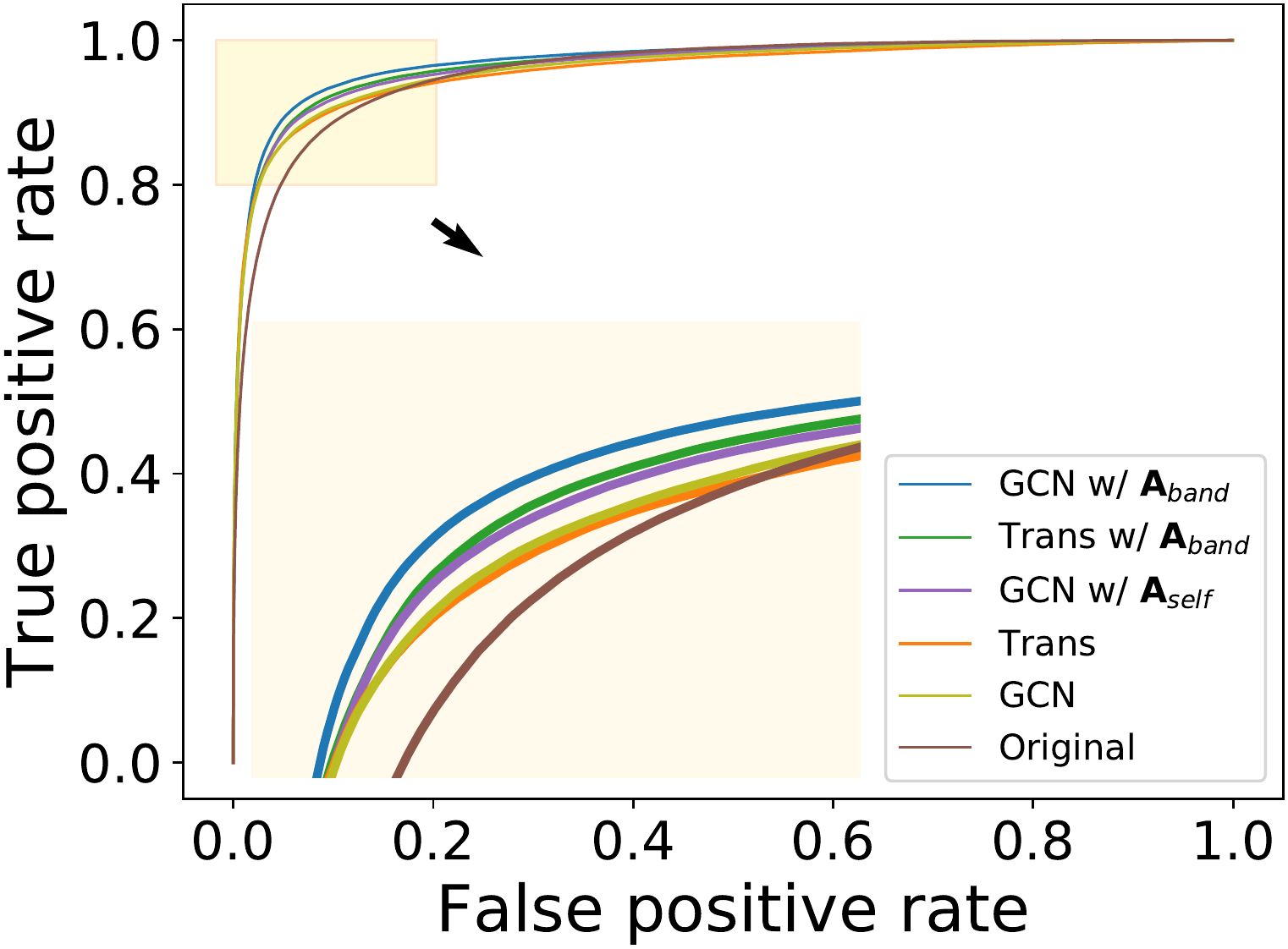}
    \caption{{VeRi-776}}
    \label{fig:effectiveness_veri}
  \end{subfigure}
  \caption{{ROC} curves of different methods on {MS-Celeb} Part1 (a), {MSMT17} (b) and {VeRi-776} (c).}
  \label{fig:roc_different_objects}
\end{figure}

\subsection{Effectiveness of the downstream tasks}
The improvement of graph structure learning can also be proven through the downstream tasks by exploiting the features it produces.
The better the graph structure is learned, the better the performance on downstream tasks.
Graphs have important applications in large-scale clustering and re-reranking in ReID.
However, they suffer from inaccurate weight problems when learning and working with the graph structure.
Armed the GCN with $\mathbf{A}_{band}$, many SOTA results are achieved.

\begin{table}[t]
    \vspace{-2mm}
    \caption{Clustering performance on {MS-Celeb}.}
    \label{tab:clustering_result1}
    \centering
    \resizebox{\textwidth}{!}{
    \begin{tabular}{lcccccccccc}
    \toprule

    \multicolumn{1}{l}{Dataset} &
    \multicolumn{2}{c}{Part1} &
    \multicolumn{2}{c}{Part3} &
    \multicolumn{2}{c}{Part5} &
    \multicolumn{2}{c}{Part7} &
    \multicolumn{2}{c}{Part9} \\

    \cmidrule(lr){1-1}
    \cmidrule(lr){2-3}
    \cmidrule(lr){4-5}
    \cmidrule(lr){6-7}
    \cmidrule(lr){8-9}
    \cmidrule(lr){10-11}

    Metrics & ${F_P}$ &  ${F_B}$ & ${F_P}$ &  ${F_B}$ & ${F_P}$ &  ${F_B}$& ${F_P}$ &  ${F_B}$& ${F_P}$ &  ${F_B}$ \\

    \midrule
        K-Means &  79.21 &  81.23 & 73.04 & 75.20 & 69.83 & 72.34 & 67.90 & 70.57 & 66.47 & 69.42 \\
        HAC     & 70.63 & 70.46 & 54.40 & 69.53 & 11.08 & 68.62 & 1.40 & 67.69& 0.37& 66.96  \\
        DBSCAN & 67.93  & 67.17 & 63.41 & 66.53 & 52.50 & 66.26 & 45.24 & 44.87 & 44.94 & 44.74  \\

    \midrule
        Clusformer & 88.20 & 87.17 & 84.60 & 84.05 & 82.79 & 82.30 & 81.03 & 80.51 & 79.91 & 79.95  \\
        STAR-FC & 91.97 & 90.21 & 88.28 & 86.26 & 86.17 & 84.13 & 84.70 & 82.63 & 83.46 & 81.47  \\
        Ada-NETS & 92.79 & 91.40 & 89.33 & 87.98 & 87.50 & 86.03 & 85.40 & 84.48 & 83.99 & 83.28  \\

    \midrule
        Original+G-cut   & 69.63 & 73.62 & 63.31 & 66.56 &  60.61 & 62.91 & 57.13 	&  58.72 &  54.42 & 58.54 \\

        Ours+G-cut    &   \textbf{93.90}	&   \textbf{92.47}	&   \textbf{90.54}	&   \textbf{89.23}	&   \textbf{88.70}	&   \textbf{87.13}	&   \textbf{86.18}	&   \textbf{85.59}	&   \textbf{84.36}	&   \textbf{84.10}  \\

        \midrule
        Original+Infomap    &   93.91	&   92.42	&   90.10	&   89.08	&   88.49	&   87.30	&   85.31	&   86.01	&   83.08	&   85.01 \\
        Ours+Infomap    &   \textbf{94.94}	&   \textbf{93.67}	&   \textbf{91.74}	&   \textbf{90.81}	&   \textbf{89.50}	&   \textbf{89.15}	&   \textbf{87.04}	&   \textbf{87.81}	&   \textbf{85.40}	&   \textbf{86.76} \\
    \bottomrule
    \end{tabular}
    }
    \vspace{-2mm}
\end{table}

\paragraph{Performance on clustering tasks}
The clustering tasks are evaluated on {MS-Celeb} and {MSMT17}.
Both of them have a large number of classes and are suitable
as clustering benchmarks. The same features output by ``GCN w/ $\mathbf{A}_{band}$'' are used as that in Section~\ref{sec:performance_larger_scales} for {MS-Celeb} and Section~\ref{sec:performance_different_objects} for {MSMT17}. The same clustering method (denoted ``G-cut'') is adopted as that in~\cite{guo2020density, wang2022adanets}, where pairs of nodes are linked when their similarity scores are larger than a tuned threshold, and clustering is finally completed by transitively merging all links via a union-find algorithm~\cite{anderson1991wait}.
The threshold is tuned for each dataset, aiming for the best balance between {BCubed} and {Pairwise} F-Scores.
In addition, map equation based unsupervised clustering algorithm Infomap~\cite{Rosvall2009, face-cluster-by-infomap} is also adopted here.
Their {BCubed} and {Pairwise} F-scores are listed in Table~\ref{tab:clustering_result1} and Table~\ref{tab:clustering_result2}, where the results
of current SOTA GCN/Transformer methods (Clusformer~\cite{nguyen2021clusformer}, STAR-FC~\cite{shen2021starfc} and Ada-NETS~\cite{wang2022adanets}) and three conventional clustering approaches (K-Means~\cite{lloyd1982least}, HAC~\cite{sibson1973slink} and DBSCAN~\cite{ester1996density}) are also listed. The results show that the features enhanced via our approach have the best clustering results in all datasets,
either with G-cut or Infomap, both of which outperform the current SOTA methods Ada-NETS and STAR-FC with large margins. In comparison, the results of the ones with Infomap are better than those with G-cut.

\begin{wraptable}{r}{8cm}
    \caption{ReID performance on MSMT17 and VeRi-776.}
    \label{tab:reid_results}
    \centering
    \small
        \begin{tabular}{lcccc}
        \toprule

        \multicolumn{1}{l}{Datasets} &
        \multicolumn{2}{c}{{MSMT17}} &
        \multicolumn{2}{c}{{VeRi-776}} \\

        \cmidrule(lr){1-1}
        \cmidrule(lr){2-3}
        \cmidrule(lr){4-5}

        Metrics & mAP$^R$ &  R1 & mAP$^R$ &  R1\\

        \midrule
            Original Feature & 62.76 & 82.36 & 79.00 & 96.60 \\
            GCN & 74.66 & 85.36 & 82.31 & 96.60 \\
            Transformer & 77.60 & 86.37 & 82.62 & 96.54\\
            GCN w/ $\mathbf{A}_{self}$ & 77.34 & 85.49 & 84.29 & 96.30\\
            Trans w/ $\mathbf{A}_{band}$ & 77.49 & 86.54 & 83.98 & 96.72 \\
            GCN w/ $\mathbf{A}_{band}$ &  78.28 & \textbf{86.64} & \textbf{84.74} & 96.78\\
            \midrule
            Original+QE & 71.10 & 84.37 & 83.09 & 95.47 \\
            Original+LBR & 70.36 & 85.28 & 83.63 & \textbf{97.13} \\
            Original+KR & 76.46 & 85.48 & 82.32 & 96.84\\
            Original+ECN & \textbf{78.79} & 86.16 & 84.25 & \textbf{97.13} \\
        \bottomrule
        \end{tabular}
\end{wraptable}

\paragraph{Performance on ReID tasks}
The ReID tasks is evaluated on two ReID datasets: {MSMT17} and {VeRi-776}.
The enhanced features used are the same as those in Table~\ref{tab:mAP_different_objects}.
{mAP}$^\mathbf{R}$ and {R1} are evaluated following the standard probe and gallery protocols of {MSMT17} and {VeRi-776}.
The results are shown in Table~\ref{tab:reid_results}.
As the output feature of learned graph can also be understood as a rerank process in ReID, comparisons are also made on commonly used rerank methods: QE~\cite{chum2007total}, LBR~\cite{luo2019spectral}, KR~\cite{zhong2017re} and ECN~\cite{sarfraz2018pose}.
Table~\ref{tab:reid_results} shows that the ``GCN w/ $\mathbf{A}_{band}$'' achieves the best or comparable performance, and all enhanced features are much better than the original SOTA ones.
In {MSMT17}, similar to the results in Section~\ref{sec:performance_different_objects}, the baselines with attention mechanism have very close performance, but work much better than the GCN. In {VeRi-776}, from the {mAP}$^\mathbf{R}$ perspective, $\mathcal{B}$\textbf{-Attention} works better than  $\mathbf{A}_{self}$, and both of them are better than the GCN, but their {R1} results are comparable. These results still support the superiority of the proposed approach, although {R1} results of {VeRi-776} are comparable, as {mAP}$^\mathbf{R}$ is a more comprehensive indication of feature quality for ReID.
The experiments on the \textbf{combination} of ``GCN w/ $\mathbf{A}_{band}$'' with these commonly used rerank methods are also conducted in Section~\ref{sec: ensemble performance in reid}, achieving more SOTA performances and proving the  complementarity between them.

\section{Related Work}
\label{section:related work}

\paragraph{Graph Convolutional Networks.}
GCNs~\cite{kipf2017semi,hamilton2017inductive,DBLP:conf/iclr/VelickovicCCRLB18} extends the operation of convolution from the grid-data~(e.g. images, videos) to non-grid data which is more common in the real world.
The classic GCN~\cite{kipf2017semi} is proposed based on the spectral theory and achieves promising results on citation network datasets: Citeseer, Cora and Pubmed~\cite{sen2008collective}.
GraphSAGE~\cite{hamilton2017inductive} further extends GCNs from transductive framework to inductive framework and obtains better results.
GAT~\cite{DBLP:conf/iclr/VelickovicCCRLB18} introduces the attention mechanism in GCN, making it more expressive.
Fast-GCN~\cite{chen2018fastgcn} interprets graph convolutions as integral transforms of embedding functions under probability measures, which not only is efficient for training but also generalizes well for inference.
Some research works~\cite{schlichtkrull2018modeling,zhang2019heterogeneous,hu2020heterogeneous,wang2019heterogeneous,ChenYiqiCRGCN} also advance GCNs for relational data modeling.
However, these GCNs are all investigated on relational datasets that have explicit graph structures.
This work aims to extend the capacity of GCNs to work on datasets~(e.g., image objects) which have no explicitly defined graph structures.

\paragraph{Graph Structure Learning.}
The literature~\cite{GNNBook2022} suggests that research on graph structure learning for GCNs can be roughly divided into discrete-based and weighted-based ones.
Some discrete-based methods~\cite{elinas2020variational,Kazi2020DifferentiableGM,ma2019flexible,franceschi2019learning} treat the graph structures as random distributions and sample binary adjacency matrices from them.
Another way~\cite{wang2022adanets} constructs discrete graphs based on heuristic proxy objectiveness aiming for clean yet rich neighbours for each node.
The binary matrices methods are difficult to optimize because of the gradient breakage, so such methods resort to variational inference~(e.g., approaches in \cite{elinas2020variational,ma2019flexible}), reinforcement learning~(e.g., approaches in \cite{Kazi2020DifferentiableGM}) or optimization in stages~(e.g., approaches in~\cite{wang2022adanets}), which may lead to sub-optimal solutions.
In addition, the parameter size of distributions~\cite{franceschi2019learning} can be very large~(e.g., $\mathcal{O} (Nk)$ where $N$ is the number of nodes and $k$ is the number of neighbours for each node) and therefore not suitable for large datasets.
The weighted-based methods~\cite{chen2019graphflow,chen2020reinforcement,on2020cut,yang2018glomo,huang2020location,chen2020iterative,henaff2015deep,li2018adaptive,guo2020density,nguyen2021clusformer}, which are the focus of this paper, learn weighted adjacency matrices so that it can be optimized by SGD techniques~\cite{robbins1951stochastic,kiefer1952stochastic,bottou2018optimization}.
The main difference between these works~\cite{chen2019graphflow,chen2020reinforcement,on2020cut,yang2018glomo,huang2020location,chen2020iterative,henaff2015deep,li2018adaptive} is in the similarity measure, which can be classified into attention-based~\cite{chen2019graphflow,chen2020reinforcement,on2020cut,yang2018glomo,huang2020location,chen2020iterative} and kernel-based~\cite{henaff2015deep,li2018adaptive} roughly.
To enhance the discriminatory power of the similarity measure, they have adopted various forms: adding learnable parameters~\cite{chen2019graphflow,huang2020location,chen2020iterative}, introducing  nonlinearity~\cite{chen2020reinforcement,on2020cut,yang2018glomo}, Gaussian kernel functions~\cite{henaff2015deep,li2018adaptive} or  self-attention~\cite{guo2020density,nguyen2021clusformer,vaswani2017attention,Choi2020LearningTG} etc.
However, these methods are sensitive to the noise from feature representations, particular when each node is represented by a high dimensional vector, leading to the problem of inaccurate weights and vulnerable graph structure.
This work aims to propose a multiple samples based similarity measure to improve the robustness of graph structure learning. 
Another approach uses the low-rank, sparsity and feature smoothness properties of the real-world graph to optimize the graph structure~\cite{Jin2020GraphSL}, but these priors are suitable for community networks rather than graph of image objects.
Some work~\cite{Johnson2020LearningGS} learns to augment the given graph with new edges by reinforcement learning to improve the performance of downstream tasks, but existing noisy edges problem in graphs cannot be handled.

\section{Conclusion}
The problem of inaccurate edge weights is identified in graph structure learning, analysed from the perspective of noise in feature vectors, and handled based on several tests.
A novel and robust similarity measure {Sim-M} is proposed, proven to be superior and verified in experiments. Then it is designed in an elegant matrix form, i.e., $\mathcal{B}$\textbf{-Attention} mechanism, to improve the graph structure quality in GCNs.
The superiority and robustness of  $\mathcal{B}$\textbf{-Attention} are validated by comprehensive experiments and achieve many SOTA results on clustering and ReID benchmarks.

\begin{ack}
We thank many colleagues at DAMO Academy of Alibaba, in particular, Yuqi Zhang, Weihua Chen and Hao Luo for useful discussions.

This research was partially funded by the National Science Foundation of
China (No. 62172443) and  Hunan Provincial Natural Science Foundation of China (No. 2022JJ30053).
\end{ack}

\medskip
{
\small
\bibliographystyle{plain}
\bibliography{neurips_2022.bib}
}

\newpage
\section*{Checklist}

\begin{enumerate}

\item For all authors...
\begin{enumerate}
  \item Do the main claims made in the abstract and introduction accurately reflect the paper's contributions and scope?
    \answerYes{See abstract and Section~\ref{section: intro}.}
  \item Did you describe the limitations of your work?
    \answerYes{See Section~\ref{section: limitation}  in Appendix.}
  \item Did you discuss any potential negative societal impacts of your work?
    \answerYes{See Section~\ref{section: potential negative impact} in Appendix.}
  \item Have you read the ethics review guidelines and ensured that your paper conforms to them?
    \answerYes{We have read the ethics guidelines on the official website.}
\end{enumerate}

\item If you are including theoretical results...
\begin{enumerate}
  \item Did you state the full set of assumptions of all theoretical results?
    \answerYes{See Section~\ref{section: multiple_tests_similarity_metric}.}
  \item Did you include complete proofs of all theoretical results?
    \answerYes{See Section~\ref{section: multiple_tests_similarity_metric}}
\end{enumerate}

\item If you ran experiments...
\begin{enumerate}
  \item Did you include the code, data, and instructions needed to reproduce the main experimental results (either in the supplemental material or as a URL)?
    \answerNo{The code and the data are proprietary. The instructions is in Section~\ref{section: experiment}.}
  \item Did you specify all the training details (e.g., data splits, hyperparameters, how they were chosen)?
    \answerYes{See in Section~\ref{section:Experimental setups}, paragraph of {\it Network architectures and settings} and the paragraphs describing the corresponding experiment in Section~\ref{sec:effectiveness}.}
  \item Did you report error bars (e.g., with respect to the random seed after running experiments multiple times)?
    \answerNo{Error bars are not reported because it would be too computationally expensive.}
  \item Did you include the total amount of compute and the type of resources used (e.g., type of GPUs, internal cluster, or cloud provider)?
    \answerYes{See Section~\ref{sec: type of computing resource}.}
\end{enumerate}

\item If you are using existing assets (e.g., code, data, models) or curating/releasing new assets...
\begin{enumerate}
  \item If your work uses existing assets, did you cite the creators?
    \answerYes{See the {\it Datasets} paragraph in Section~\ref{section:Experimental setups}.}
  \item Did you mention the license of the assets?
    \answerNo{The used assets are very common in papers of computer vision.}
  \item Did you include any new assets either in the supplemental material or as a URL?
    \answerNo{}
  \item Did you discuss whether and how consent was obtained from people whose data you're using/curating?
    \answerNo{The used assets are very common in papers of computer vision.}
  \item Did you discuss whether the data you are using/curating contains personally identifiable information or offensive content?
    \answerYes{See the footnote in {\it Datasets} paragraph in Section~\ref{section:Experimental setups}: Considering the ethical and privacy issues, all the datasets used in this paper are feature vectors which can not be restored to images.}
\end{enumerate}

\item If you used crowdsourcing or conducted research with human subjects...
\begin{enumerate}
  \item Did you include the full text of instructions given to participants and screenshots, if applicable?
    \answerNA{}
  \item Did you describe any potential participant risks, with links to Institutional Review Board (IRB) approvals, if applicable?
    \answerNA{}
  \item Did you include the estimated hourly wage paid to participants and the total amount spent on participant compensation?
    \answerNA{}
\end{enumerate}

\end{enumerate}

\newpage
\appendix

\section{Experiments}

\subsection{Details of the dataset}
\label{appendix: detail of dataset}
\noindent{MS-Celeb}~\cite{guo2016ms} is a dataset with 10M face images of nearly 100K individuals. A cleaned version~\cite{deng2019arcface} with about 5.8M face images of 86K identities is used in this paper. The protocol and features are the same as that in~\cite{yang2019learning}, where the dataset is divided into 10 parts by identities (Part0-9): Part0 for training and Part1-9 for testing.
The feature dimension is 256.
The data amounts of Part 0-9 (0, 1, 3, 5, 7 and 9) are respectively: 584K, 584K, 1.74M, 2.89M, 4.05M and 5.21M.

\noindent{MSMT17}~\cite{wei2018person} is the current largest ReID dataset.
It contains 32,621 images of 1,041 individuals for training and 93,820 images of 3,060 individuals for testing. These images are captured by 15 cameras under different time periods, weather and light conditions.
This work uses the SOTA features from AGW~\cite{pami21reidsurvey} with a dimension of 2048.

\noindent{VeRi-776}~\cite{7553002} is a commonly used vehicle ReID benchmark, which contains over 40K bounding boxes of 619 vehicles captured by 20 cameras in unconstrained traffic scenes. The dataset is divided into two parts: 37,715 images of 576 vehicles for training, and 13,257 images of 200 vehicles for testing. This work uses the features extracted by the SOTA ViT-B/16 Baseline in TransReID~\cite{He_2021_ICCV}, with 768 dimensions.

\subsection{Effectiveness of the {Sim-M}}
\label{appendix: sim-m on msmt veri}
The experiments on {MSMT17} and {VeRi-776} also show that the advantage of Sim-M over Sim-S is increasingly obvious as $k$ increases and then drops for introduction of too much noise as in Table~\ref{tab:auc_of_msmt_and_veri}.

\begin{table}[h!]
    \centering
    \caption{{AUC} of {Sim-S} ({AUC}$_\mathbf{S}$), {Sim-M} ({AUC}$_\mathbf{M}$) and their difference ({AUC}$_\mathbf{\delta}$={AUC}$_\mathbf{M}$-{AUC}$_\mathbf{S}$) on {MSMT17} and {VeRi-776}. The values of {AUC} are scaled by a factor of 100 for a better show.}
    \label{tab:auc_of_msmt_and_veri}

    \begin{tabular}{lcccccccc}
        \toprule
        \multicolumn{1}{l}{Datasets} &
        \multicolumn{4}{c}{{MSMT17}} &
        \multicolumn{4}{c}{{VeRi-776}} \\

        \cmidrule(lr){1-1}
        \cmidrule(lr){2-5}
        \cmidrule(lr){6-9}

        $k$  &
        {ENR} & {AUC}$_\mathbf{S}$ & {AUC}$_\mathbf{M}$ & {AUC}$_\mathbf{\delta}$ &
        {ENR} & {AUC}$_\mathbf{S}$ & {AUC}$_\mathbf{M}$ & {AUC}$_\mathbf{\delta}$ \\
        \midrule
        5  & 0.06 & 89.23 & 89.74 & 0.52 & 0.01 & 90.96 & 90.40 & -0.56 \\
        10 & 0.13 & 89.33 & 90.02  & 0.69 & 0.03 & 89.34 & 90.09 & 0.75 \\
        20 & 0.26 & 87.78 & 88.63 & 0.85 & 0.07 & 85.86 & 86.54 & 0.68 \\
        40 & 0.48 & 86.72 & 86.62 & -0.10 & 0.17 & 83.05 & 82.89 & -0.16 \\
        \bottomrule
    \end{tabular}
\end{table}

\subsection{Performance on larger scales of MS-Celeb}
\label{sec:performance_larger_scales}

Experiments are also conducted to compare these baselines on larger scales of {MS-Celeb} (i.e., Part3-9). The same model for {MS-Celeb} Part1 is used as in Section~\ref{sec:performance_different_objects}, but tested on Part3-9.
The results in Table~\ref{tab:mAP_different_scales} show that all baselines have performance degradation when the scale enlarges. The $\mathcal{B}$\textbf{-Attention} obtains the best results in all parts; $\mathbf{A}_{self}$ works better than GCN; all output features are much better than the original ones. Among the baselines with attention mechanisms, the ones using GCN-based networks have better performances. The GCN with $\mathbf{A}_{band}$ has the highest {mAP} scores.
Figure~\ref{fig:roc_celeb_part1_9} shows the {ROC} curves of GCN, ``GCN w/ $\mathbf{A}_{self}$'' and ``GCN w/ $\mathbf{A}_{band}$'' across all parts of {MS-Celeb}, from which we can also see the performance degradation with larger testsets. The performance degradation of ``GCN w/ $\mathbf{A}_{band}$'' is much smaller than the other two baselines, which is shown more clearly from the differences of {AUC} in Table~\ref{tab:difference_of_auc}.
In Table~\ref{tab:difference_of_auc}, differences on {AUC} of two adjacent parts ~(e.g., part1 - part3, part3 - part5, part5 - part7, part7 - part9) and their summation for each method are shown.
It can be observed that the summation of the differences of ``GCN w/ $\mathbf{A}_{band}$'' is 1.49, which is only 58.30\% and 67.26\% of the ones of GCN~(2.56) and ``GCN with $\mathbf{A}_{self}$"~(2.22).
This validates that the features from ``GCN w/ $\mathbf{A}_{band}$'' have a much smaller performance degradation than the other two baselines with larger testsets from the perspective of verification.
Similar observations can also be seen in Table~\ref{tab:mAP_different_scales}, where the performance degradation of the baselines with attention mechanisms is much smaller than GCN and original features.

\begin{figure}[h!]
  \centering
  \begin{subfigure}{0.32\linewidth}
    \includegraphics[width=1.0\linewidth]{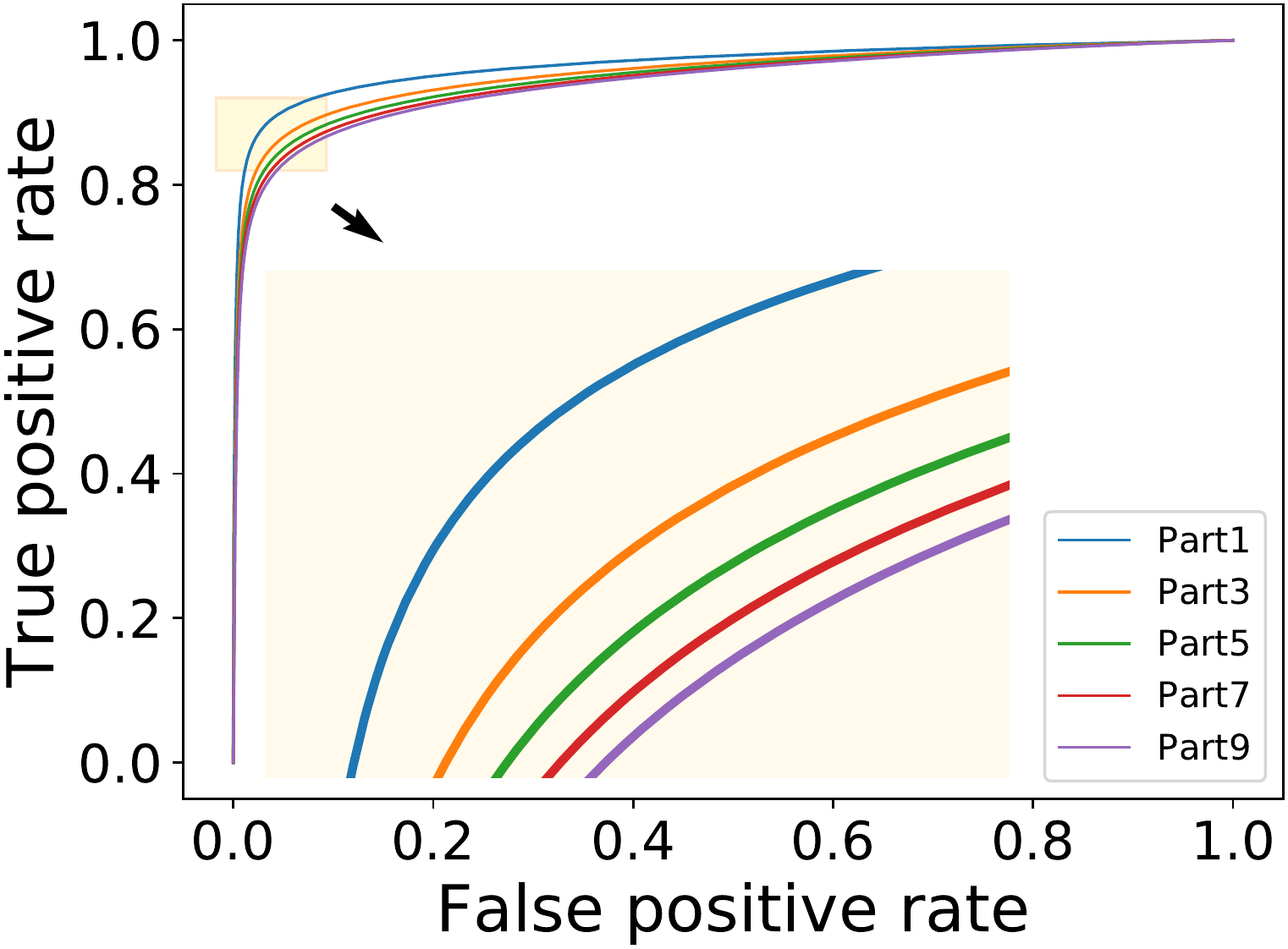}
    \caption{GCN}
    \label{fig:effectiveness_celeb_part1_9_GCN}
  \end{subfigure}
  \hfill
  \begin{subfigure}{0.32\linewidth}
    \includegraphics[width=1.0\linewidth]{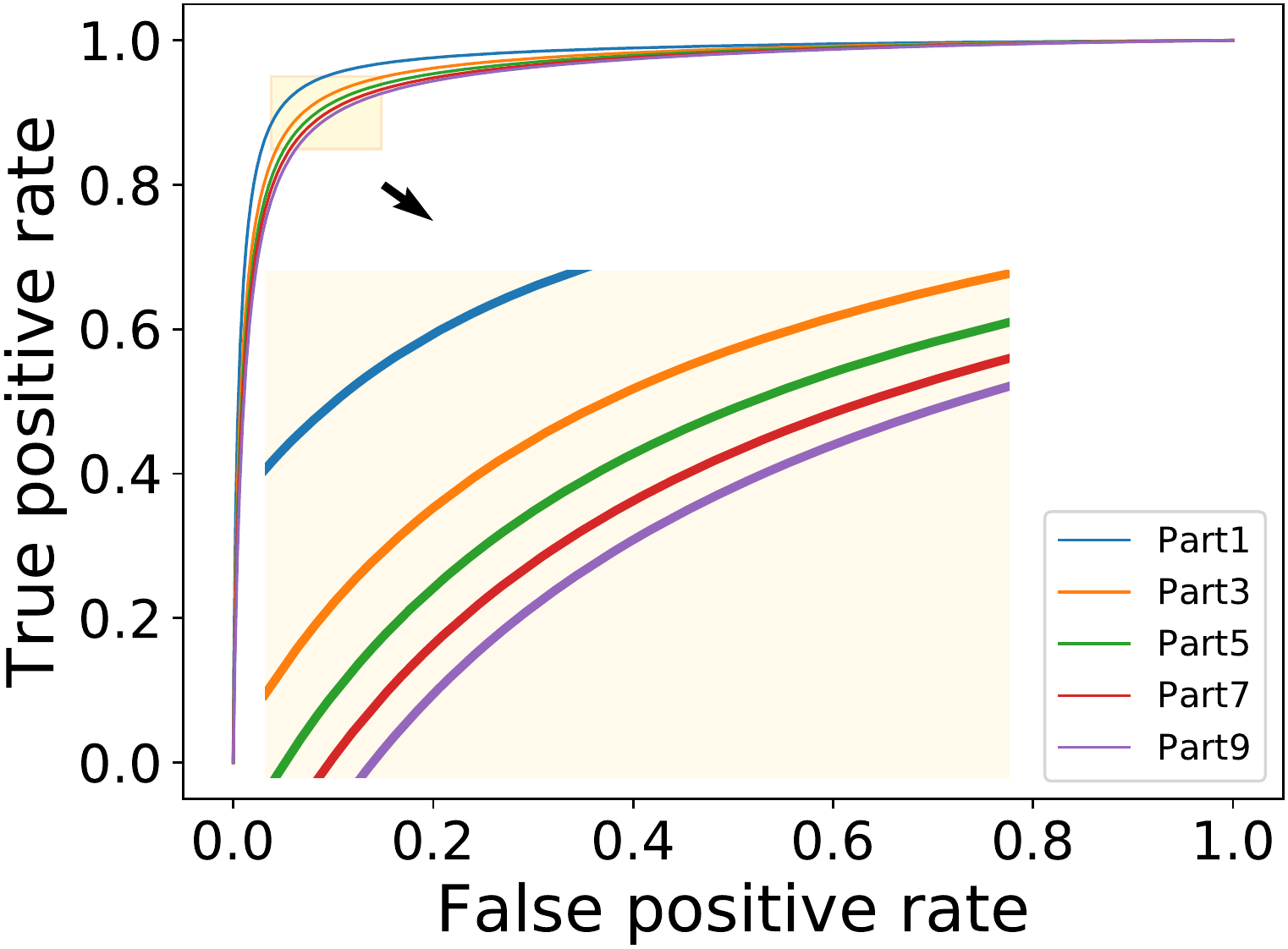}
    \caption{GCN with $\mathbf{A}_{self}$}
    \label{fig:effectiveness_celeb_part1_9_GCN_self}
  \end{subfigure}
  \hfill
  \begin{subfigure}{0.32\linewidth}
    \includegraphics[width=1.0\linewidth]{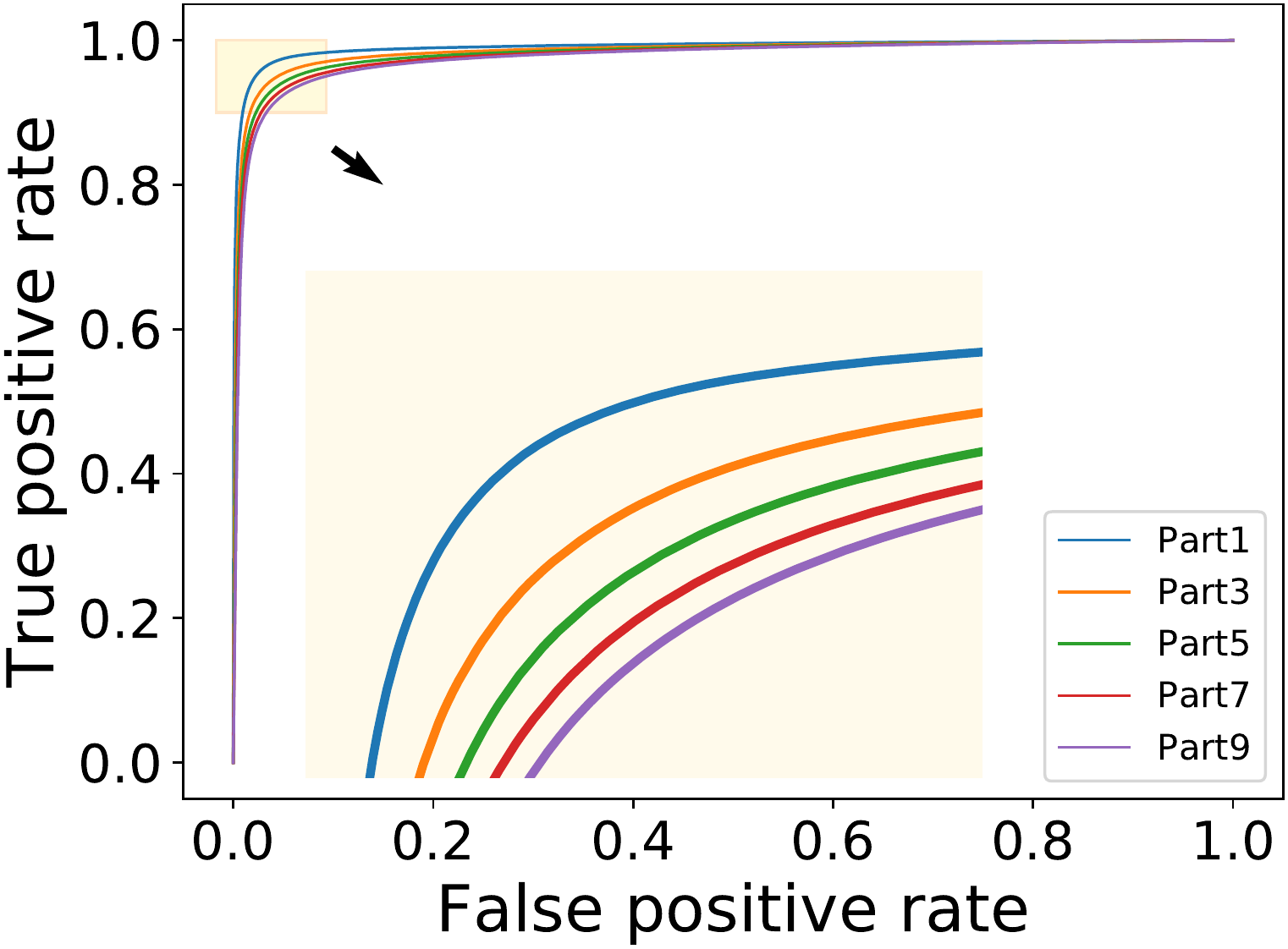}
    \caption{GCN with $\mathbf{A}_{band}$}
    \label{fig:effectiveness_celeb_part1_9_Attn2}
  \end{subfigure}
  \caption{{ROC} curves of GCN (a), GCN with $\mathbf{A}_{self}$ (b), and GCN with $\mathbf{A}_{band}$ (c) on {MS-Celeb} Part1-9.}
  \label{fig:roc_celeb_part1_9}
\end{figure}

\begin{table}[h!]
    \centering
    \captionof{table}{Feature quality ({mAP}) on different scales of {MS-Celeb}.}
    \label{tab:mAP_different_scales}
        \begin{tabular}{lccccc}
        \toprule
        Partition & Part1 & Part3 & Part5 & Part7 & Part9 \\
        \midrule
            Original Feature & 80.07 & 75.28 & 73.21 & 71.73 & 70.63 \\
            GCN & 90.15 & 86.27 & 84.34 & 82.95 & 81.90 \\
            Transformer & 93.86 & 91.32 & 89.99 & 89.03 & 88.28 \\
            GCN w/ $\mathbf{A}_{self}$ & 94.09 & 91.66 & 90.38 & 89.45 & 88.73 \\
            Trans w/ $\mathbf{A}_{band}$ & 95.70 & 93.38 & 92.08 & 91.09 & 90.30 \\
            GCN w/ $\mathbf{A}_{band}$ & \textbf{95.97} & \textbf{93.71} & \textbf{92.38} & \textbf{91.30} & \textbf{90.46} \\
        \bottomrule
        \end{tabular}
\end{table}

\begin{table}[h!]
    \caption{
    Differences on {AUC}~($\delta_{AUC}$) of two adjacent parts of {MS-Celeb} for GCN, ``GCN with $\mathbf{A}_{self}$" and ``GCN with $\mathbf{A}_{band}$". The values in the table are scaled by a factor of 100 for better display.
    }
    \label{tab:difference_of_auc}
    \centering
    \begin{tabular}{lccc}
    \toprule
        $\delta_{AUC}$ & GCN & GCN with $\mathbf{A}_{self}$	& GCN with $\mathbf{A}_{band}$  \\
    \midrule
        part1 - part3 & 1.23 & 1.03 & 0.58 \\
        part3 - part5 & 0.58 & 0.51 & 0.37 \\
        part5 - part7 & 0.43 & 0.37 & 0.30 \\
        part7 - part9 & 0.32 & 0.31 & 0.25 \\
        summation     & 2.56 & 2.22 & 1.49 \\
    \bottomrule
    \end{tabular}
\end{table}

\subsection{Ablation study on $\mathcal{B}$\textbf{-Attention}}
\label{sec: ablation study}
Ablation experiments are conducted mainly on {MS-Celeb} to study the design of the topological architecture and attention fusion method of $\mathcal{B}$\textbf{-Attention}. Similar to Section~\ref{sec:performance_enrs}, models are trained on Part0 and tested on Part1. All settings other than the factor being investigated are the same as that in Section~\ref{sec:performance_enrs} for ``GCN with $\mathbf{A}_{band}$'' with $k=120$.

\subsubsection{topological architecture design}
\label{sec:ablation_different_attention_topological_architectures}

This study compares the topological architecture of the $\mathcal{B}$\textbf{-Attention} with three other designs:
(1)~one with only the $\mathbf{A}_{qart}$ of $\mathcal{B}$\textbf{-Attention};
(2)~one using only the $\mathcal{Q}$\textbf{-Attention} part but with the $\mathbf{A}_X$ replaced by $\mathbf{A}_{self}$ in Equation~\ref{eq:ctx-attention}, denoted as $\widetilde{\mathbf{A}}_{qart}$;
(3)~and one with $\mathcal{B}$\textbf{-Attention} but with the $\mathbf{A}_{qart}$ replaced by $\widetilde{\mathbf{A}}_{qart}$, denoted as $\widetilde{\mathbf{A}}_{band}$.
Table~\ref{tab:mAP_different_topological_archs} shows their {mAP} results, where
the results of ``GCN w/ $\mathbf{A}_{band}$'' and one with only $\mathbf{A}_{self}$ are also listed for comparison. For the purpose of ablation study, the one with only $\mathbf{A}_{self}$ here has the same depth (i.e., three layers) with ``GCN w/ $\mathbf{A}_{band}$'', which is different from that in Table~\ref{tab:mAP_different_objects} (two layers). The comparisons are made on two conditions: with or without $\mathbf{W}_{qart}$ (i.e., $\mathbf{W}^Q_{qart}$ and $\mathbf{W}^K_{qart}$) in Equation~\ref{eq:ctx-attention}. For the cases without $\mathbf{W}_{qart}$, $\mathbf{W}^Q_{qart}$ and $\mathbf{W}^K_{qart}$ are both identity matrices, i.e., $\mathbf{Q}_{qart} = \mathbf{K}_{qart} = \mathbf{A}_X$. Their {ROC} curves are shown in Figure~\ref{fig:ablation_ctx_topologies}.
It can be seen that the $\mathcal{B}$\textbf{-Attention} design achieves the best results in both {mAP} and {ROC}; $\mathbf{A}_{qart}$ has better performance than $\mathbf{A}_{self}$; $\widetilde{\mathbf{A}}_{qart}$ and $\widetilde{\mathbf{A}}_{band}$
are worse than $\mathbf{A}_{qart}$ and $\mathbf{A}_{band}$ respectively.
This shows that the combination of $\mathbf{A}_{self}$ and $\mathbf{A}_{qart}$ and the use of $\mathbf{A}_X$ (instead of $\mathbf{A}_{self}$) in $\mathbf{A}_{qart}$ both contribute to the performance improvement.
In addition, the baselines with $\mathbf{W}_{qart}$ have better performances (higher {mAP} scores and better {ROC}) than those without, indicating that $\mathbf{W}^Q_{qart}$ and $\mathbf{W}^K_{qart}$ are also critical for better graph structure learning.

\begin{minipage}{\linewidth}
    \begin{minipage}{0.4\textwidth}
        \captionof{table}{Feature quality ({mAP}) of different topological architectures of the attention mechanism.}
        \label{tab:mAP_different_topological_archs}
        \centering
            \begin{tabular}{lcc}
            \toprule
            Arch./Cond. & w/ $\mathbf{W}_{qart}$ & w/o $\mathbf{W}_{qart}$        \\
            \midrule
            $\mathbf{A}_{self}$ & 88.10    & 88.10   \\
            $\widetilde{\mathbf{A}}_{qart}$   & 90.10  & 89.40   \\
            $\mathbf{A}_{qart}$   & 92.26 & 88.92 \\
            $\widetilde{\mathbf{A}}_{band}$  & 94.19 & 89.53 \\
            $\mathbf{A}_{band}$  & \textbf{95.97} & \textbf{93.67} \\
            \bottomrule
            \end{tabular}
    \end{minipage}
    \hfill
    \begin{minipage}{0.5\textwidth}
        \centering
        \includegraphics[width=0.8\linewidth]{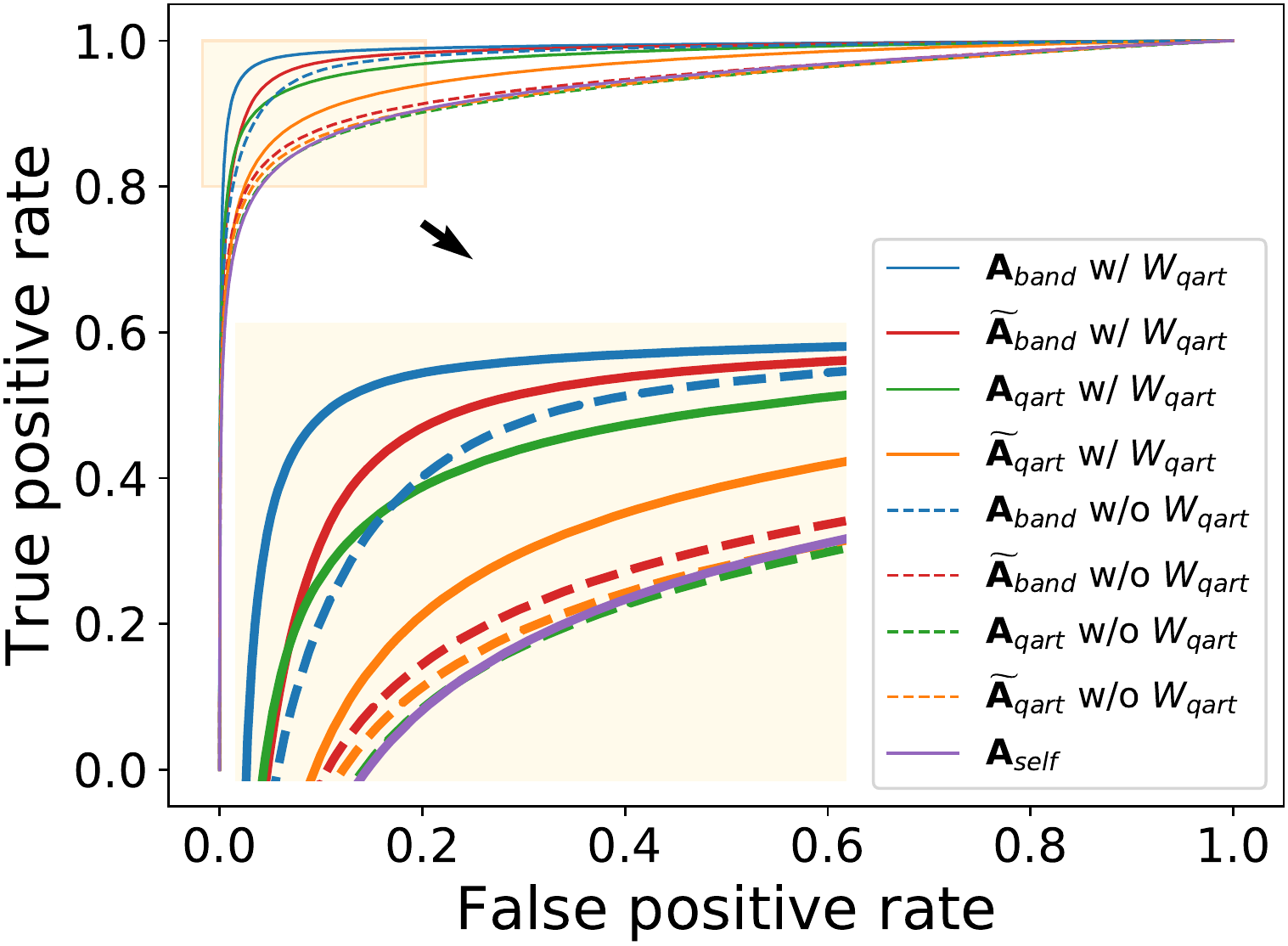}
        \captionof{figure}{{ROC} curves of different topological architectures of the attention mechanism.}
        \label{fig:ablation_ctx_topologies}
    \end{minipage}
\end{minipage}

The ablation experiments are also \textbf{conducted on attention map fusion method in Section~\ref{section: attention map fusion method} in Appendix}.
The results in Table~\ref{tab:mAP_different_fusion_methods} and Figure~\ref{fig:ablation_fusion} together will prove that the proposed form in Equation~\ref{eq:hybrid-attention} has the best performance across all datasets.

\subsubsection{Attention map fusion method}
\label{section: attention map fusion method}
The attention map fusion method is studied by comparing it to two other designs:
direct element-wise addition without weighting~($\mathbf{A}_{qart} + \mathbf{A}_{self}$) and element-wise multiplication~($\mathbf{A}_{qart} \odot \mathbf{A}_{self}$) in Equation~\ref{eq:hybrid-attention}.
The results are shown in Table~\ref{tab:mAP_different_fusion_methods} and Figure~\ref{fig:ablation_fusion}: The $\theta_{qart} \mathbf{A}_{qart} + \theta_{self} \mathbf{A}_{self}$ design works the best, and the performance of $\mathbf{A}_{qart} \odot \mathbf{A}_{self}$ is very close to $\theta_{qart} \mathbf{A}_{qart} + \theta_{self} \mathbf{A}_{self}$.
However, results of {MSMT17} and {VeRi-776} in  Table~\ref{tab:mAP_different_fusion_methods} shows our proposed method has the best performance across all datasets.

\begin{table}[h!]
    \caption{Feature quality ({mAP}) of different attention map fusion methods, where $\odot$ denotes element-wise multiplication.}
    \label{tab:mAP_different_fusion_methods}
    \centering
    \setlength{\tabcolsep}{1.5mm}
    \begin{tabular}{cccc}
    \toprule
    Dataset & $\mathbf{A}_{qart} + \mathbf{A}_{self}$ & $\mathbf{A}_{qart} \odot \mathbf{A}_{self}$ & $\theta_{qart} \mathbf{A}_{qart} + \theta_{self} \mathbf{A}_{self}$ \\
    \midrule
        {MS-Celeb} & 92.83 & 95.71 & \textbf{95.97} \\
        {MSMT17} & 84.95 & 85.59 & \textbf{86.04} \\
        {VeRi-776} & 85.97 & 85.53 & \textbf{87.59} \\
    \bottomrule
    \end{tabular}
\end{table}

\begin{minipage}{\linewidth}
    \begin{minipage}{0.45\linewidth}
            \centering
            \includegraphics[width=1.\linewidth]{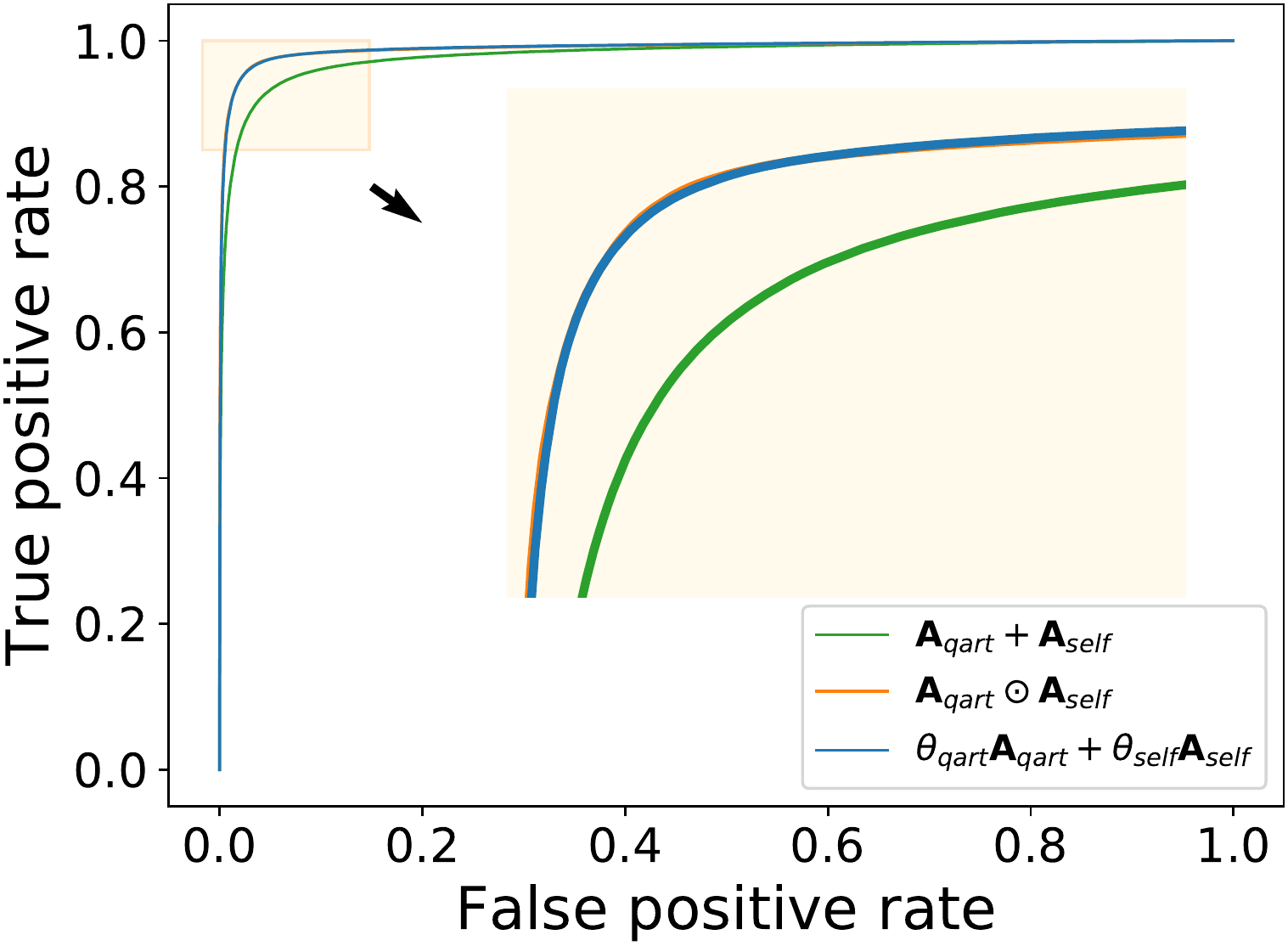}
            \captionof{figure}{{ROC} curves of different fusion methods on MS-Celeb~(part1).}
            \label{fig:ablation_fusion}
    \end{minipage}
    \hfill
    \begin{minipage}{0.5\linewidth}
            \captionof{table}{Clustering performance on {MSMT17}.}
            \label{tab:clustering_result2}
            \centering
            \begin{tabular}{lcc}
            \toprule
            \multicolumn{1}{l}{Dataset} &
            \multicolumn{2}{c}{MSMT17}\\
            \cmidrule(lr){1-1}
            \cmidrule(lr){2-3}

            Metrics          & ${F_P}$ & ${F_B}$ \\
            \midrule
            K-Means          & 60.32          & 69.28          \\
            HAC              & 63.95          & 73.75          \\
            DBSCAN           & 48.23          & 53.90          \\
            \midrule
            Clusformer       & -              & -              \\
            STAR-FC          & -              & -              \\
            Ada-NETS         & 73.51          & 77.40          \\
            \midrule
            Original+G-cut   & 50.60          & 66.56          \\
            Ours+G-cut       & \textbf{74.93} & \textbf{79.91} \\
            \midrule
            Original+Infomap & 71.33          & 79.16          \\
            Ours+Infomap     & \textbf{77.54} & \textbf{81.75} \\
            \bottomrule
            \end{tabular}%
    \end{minipage}
\end{minipage}

\subsection{Clustering performance on {MSMT17}}
The clustering tasks are also evaluated on {MSMT17} and the results are in Table~\ref{tab:clustering_result2}.

\subsection{Ensemble Performance on ReID tasks}
\label{sec: ensemble performance in reid}
In addition to directly comparing with the commonly used rerank methods~(e.g., QE~\cite{chum2007total}, LBR~\cite{luo2019spectral}, KR~\cite{zhong2017re} and ECN~\cite{sarfraz2018pose}) in Section 4.4, the ``GCN with $\mathbf{A}_{band}$" can also be combined with them.
The commonly used rerank methods can utilize the enhanced features from ``GCN with $\mathbf{A}_{band}$" for better ranking.
The results are shown in Table~\ref{tab:ensemble_reid_results}.
Since {mAP}$\mathbf{^R}$ can provide a more comprehensive evaluation of the feature quality than {R1} for ReID, the {mAP}$\mathbf{^R}$ is discussed more often.
It can be observed that new SOTA performances are obtained on both datasets.
The {mAP}$\mathbf{^R}$ on {MSMT17} is as high as 79.67 by ``$\mathbf{A}_{band}$+KR" and on {VeRi-776} is as high as 85.92 by ``$\mathbf{A}_{band}$+LBR".
For most of the commonly used rerank methods, the {mAP}$\mathbf{^R}$ of reranking with $\mathbf{A}_{band}$ is higher than with original features and also higher than ``GCN w/ $\mathbf{A}_{band}$''.
The only exception is ``$\mathbf{A}_{band}$+ECN" which is higher than ``GCN w/ $\mathbf{A}_{band}$'' but only comparable with ``original+ECN".
The results show that there is a complementary relationship between ``GCN with $\mathbf{A}_{band}$" and commonly used rerank methods.
The ``GCN with $\mathbf{A}_{band}$" can provide better features so that the commonly used rerank methods can be further improved, although the ${R1}$ results are only comparable.

\begin{table}[h!]
    \caption{Ensemble ReID performance on MSMT17 and VeRi-776.
    ``Original+X" means reranking with original features by methods of ``X".
    ``$\mathbf{A}_{band}$+X" means reranking with features from ``GCN w/ $\mathbf{A}_{band}$" by methods of ``X".
    }
    \label{tab:ensemble_reid_results}
    \centering
    \begin{tabular}{lcccc}
    \toprule

    \multicolumn{1}{l}{Dataset} &
    \multicolumn{2}{c}{MSMT17} &
    \multicolumn{2}{c}{VeRi-776} \\

    \cmidrule(lr){1-1}
    \cmidrule(lr){2-3}
    \cmidrule(lr){4-5}

    Metrics & mAP$^R$ &  R1 & mAP$^R$ &  R1\\
    \midrule
        Original Feature & 62.76 & 82.36 & 79.00 & 96.60 \\
        GCN w/ $\mathbf{A}_{band}$ &  78.28 & 86.64 & 84.74 & 96.78\\
    \midrule
        Original+QE & 71.10 & 84.37 & 83.09 & 95.47 \\
        Original+LBR & 70.36 & 85.28 & 83.63 & \textbf{97.13} \\
        Original+KR & 76.46 & 85.48 & 82.32 & 96.84\\
        Original+ECN & 78.79 & 86.16 & 84.25 & \textbf{97.13} \\
    \midrule
        $\mathbf{A}_{band}$+QE & 79.37 & \textbf{86.91} & 85.88 & 96.90 \\
        $\mathbf{A}_{band}$+LBR & 79.54 & 86.86  & \textbf{85.92} & 96.90 \\
        $\mathbf{A}_{band}$+KR & \textbf{79.67} & 84.81 & 85.45 & 96.60 \\
        $\mathbf{A}_{band}$+ECN & 78.70 & 84.05 & 85.61 & 96.90 \\

    \bottomrule
    \end{tabular}
\end{table}

\subsection{Declaration of the type of computing resources}
\label{sec: type of computing resource}
All experiments are conducted on a single machine, with 54 E5-2682 v4 CPUs, 8 NVIDIA P100 cards and 400G memory.

\section{Limitations}
\label{section: limitation}
Our approach achieves promising results in graph structure learning, but it still has the following limitations:
\begin{itemize}
    \item The $\mathcal{B}$\textbf{-Attention} requires more computation and number of parameters as shown in Section~\ref{section: multiple_tests_on_GNNs}.
    However, the baseline methods can not achieve the same performance even adding their network depth. The layer number of network in each method is tuned in all experiments.
    \item The superiority of $\mathcal{B}$\textbf{-Attention} is not obvious enough on datasets with few samples in each category compared with self attention as revealed in Section ~\ref{sec:performance_enrs}.
    \item The performance of $\mathcal{B}$\textbf{-Attention} will also degrade when with large {ENR} as discussed in Section~\ref{sec:performance_enrs}, indicating the current form still has improvement room on the robustness against noise to further investigate.
    \item The outputs $\mathcal{B}$\textbf{-Attention} will vary depending on the choice of nearest neighbours. However, the superiority of this method can be guaranteed as long as the conditions in Section~\ref{section: multiple_tests_similarity_metric} are met.
\end{itemize}

\section{Potential Negative Societal Impacts}
\label{section: potential negative impact}
The proposed graph structure learning method inherently is harmless just like many other AI technologies.
This technology may have negative societal impacts if someone uses it for malicious applications such as surveillance, personal information collections.
As authors, we advocate for proper technology usage.

\section{Pseudocode for self-attention and $\mathcal{Q}$\textbf{-Attention}}
\label{sec: pseudocode}
\begin{algorithm}[h]
    \caption{\label{algorithm}PyTorch-style pseudocode for self-attention and $\mathcal{Q}$\textbf{-Attention}}
    \label{alg:method}
    \definecolor{codeblue}{rgb}{0.25,0.5,0.5}
    \definecolor{codekw}{rgb}{0.85, 0.18, 0.50}
    \newcommand{\algofontsize}{11.0pt}
    \lstset{
      backgroundcolor=\color{white},
      basicstyle=\fontsize{\algofontsize}{\algofontsize}\ttfamily\selectfont,
      columns=fullflexible,
      breaklines=true,
      captionpos=b,
      commentstyle=\fontsize{\algofontsize}{\algofontsize}\color{codeblue},
      keywordstyle=\fontsize{\algofontsize}{\algofontsize}\color{black},
    }
    \begin{lstlisting}[language=python]
# X: the input feature matrix, with shape (N, D)
def self_attention(X):
    Q_self = torch.matmul(X, W_self_Q)
    K_self = torch.matmul(X, W_self_K)
    self_attn = torch.matmul(Q_self, K_self.transpose(-2, -1))
    return self_attn

def Q_attention(X):
    A_X = torch.matmul(X, X.transpose(-2, -1))
    Q_qart = torch.matmul(A_X, W_qart_Q)
    K_qart = torch.matmul(A_X, W_qart_K)
    Q_attn = torch.matmul(Q_qart, K_qart.transpose(-2, -1))
    return Q_attn
    \end{lstlisting}
\end{algorithm}

\section{Detailed Proof of with generalized version of Chernoff Bounds}
\label{sec: generalized chernoff bounds}
Below is a full analysis for the practical algorithm in Section 2.3 that does not assume a binary relationship between two nodes in $A_X$.

Let $X_i$ denote a random variable with no distribution assumptions such that $a \leq X_i \leq b$ for all $i$.
Define $X=\sum_1^m X_i$, $\mu=\mathbb{E}(X)$. Then for all $0 < \epsilon < 1$, we have the general version of Chernoff inequality as follows.

\  $(\romannumeral1) \ \  \textbf{Upper Tail:} \ \ \mathbb{P}(X \geq (1+\epsilon)\mu) \leq e^{-\frac{\epsilon^2 \mu ^2}{m (b-a)^2}} \ $;

$(\romannumeral2) \ \ \textbf{Lower Tail:} \ \ \mathbb{P}(X \leq (1-\epsilon)\mu) \leq e^{-\frac{\epsilon^2 \mu^2}{m (b-a)^2}}$ .

The above show that with probability at least $1-\delta$, $ (1-\epsilon)\mu \leq X \leq (1+\epsilon)\mu$  where $\delta = e^{-\frac{\epsilon^2 \mu^2}{m (b-a)^2}}$ or equivalently $\epsilon =\sqrt{\frac{m (b-a)^2log(1/\delta)}{\mu^2}}$.

Instead of assuming $p$ and $q$ for making connections in the case of $C(v_i) = C(v_j)$ and $C(v_i) \neq C(v_j)$, we assume that the similarity score $s_{i,j}$ is sampled from a bounded distribution (bounded between -1 and +1) with mean of $s_+$ when $C(v_i) = C(v_j)$, and $s_{i,j}$ is sampled from a bounded distribution (bound between -1 and +1) with mean of $s_-$ when $C(v_i) = C(v_j)$.
We of course assume $s_+ > s_-$.
In expectation, the number of edges connected under $S_{i,j}^k$ on $\mathcal{V}_{i,j}$ is

\begin{equation}
    \left\{
    \begin{array}{ll}
        \mu^{-} = s_+ s_- m, &C(v_i) \neq C(v_j) \ , \\
        \mu^{+} = (\alpha s_+^2+(1-\alpha) s_-^2)m, &C(v_i)=C(v_j) \ ,
    \end{array}
    \right.
\end{equation}
where $C(v)$ is the category of node $v$.

It should be noted that $(b-a)^2=4$ under our assumption.
Therefore, when $v_{i},v_{j}$ are sampled from different categories, we have, with probability at least $1-\delta$,
\begin{equation}
    \begin{aligned}
        \sum_{k \in V_{i,j}}S_{i,j}^k &\leq (1+\epsilon^{-})\mu^{-} \\
        \sum_{k \in V_{i,j}}S_{i,j}^k &\leq \left(1 + \sqrt{\frac{ 4mlog(1/\delta)} { (\mu^{-})^2  }} \right)\mu^{-} \\
        S_{i,j} = \frac{1}{m}\sum_{k \in V_{i,j}}S_{i,j}^k &\leq \left( 1 + \sqrt{\frac{4nlog(1/\delta)} {(s_{+}s_{-}m)^2}} \right) s_{+}s_{-} \ .
    \end{aligned}
\end{equation}

When $v_{i},v_{j}$ are sampled from the same category, we have, with probability at least $1-\delta$,
\begin{equation}
    \begin{aligned}
        \sum_{k \in V_{i,j}}S_{i,j}^k &\geq (1-\epsilon^{+})\mu^{+} \\
        \sum_{k \in V_{i,j}}S_{i,j}^k &\geq \left( 1-\sqrt{\frac{4mlog({1/\delta}  )}{\mu^{+}}} \right)\mu^{+}\\
        S_{i,j}=\frac{1}{m}\sum_{k \in V_{i,j}}S_{i,j}^k &\geq \left( 1-\sqrt{\frac{4nlog(1/\delta)}{ (\alpha s_+^2+(1-\alpha) s_-^2)^2m^2 }} \right)(\alpha s_+^2+(1-\alpha) s_-^2) \ .
    \end{aligned}
\end{equation}

The two distributions can be well separated if \textbf{the above two bounds do not overlap}. That is, we require that

\begin{equation}
    \begin{aligned}
        (1+\epsilon^{-}) \mu^{-} < (1-\epsilon^{+}) \mu^{+} \ .
    \end{aligned}
\end{equation}

On the other hand, it is obvious $s_+s_- < \alpha s_+^2+(1-\alpha) s_-^2$, so we know
\begin{equation}
    \begin{aligned}
        \epsilon^{-}= \sqrt{\frac{4mlog(1/\delta)} { (s_+s_-m)^2 } } > \epsilon^{+} = \sqrt{\frac{4mlog(1/\delta)}{(\alpha s_+^2+(1-\alpha) s_-^2)^2m^2}} \ .
    \end{aligned}
\end{equation}

Set $\epsilon=\epsilon^{-}$ to have a larger step and a stricter condition, we have
\begin{equation}
    \begin{aligned}
        (1+\epsilon) \mu^{-} &< (1-\epsilon) \mu^{+} \\
        \epsilon &< \frac{u^{+} - u^{-}}{u^{+} + u^{-}}
    \end{aligned}
\end{equation}

\begin{equation}
    \begin{aligned}
    \sqrt\frac{4m\log{(1/\delta)}}{(s_+s_-m)^2}  < \frac{(s_+-s_-)^2 + (2\alpha-1) (s_+^2-s_-^2) }{(s_++s_-)^2 + (2\alpha-1) (s_+^2-s_-^2)}.
    \end{aligned}
\end{equation}

Because $\alpha>\frac{1}{2}$ and $s_+ > s_-$,
\begin{equation}
    \begin{aligned}
        (s_+-s_-)^2 + (2\alpha-1) (s_+^2-s_-^2) &> 0, \\
        (s_++s_-)^2 + (2\alpha-1) (s_+^2-s_-^2) &> 0.
    \end{aligned}
\end{equation}

So we have
\begin{equation}
    \begin{aligned}
        m > \frac{4 ((s_++s_-)^2 + (2\alpha-1) (s_+^2-s_-^2))^2 } {s_+^2s_-^2( (s_+-s_-)^2 + (2\alpha-1) (s_+^2-s_-^2))^2 }) \log(\frac{1}{\delta}).
    \end{aligned}
\end{equation}

We complete the proof by substituting $\delta$ with
\begin{equation}
    \begin{aligned}
        \delta \triangleq \frac{1}{\gamma} \min ((1-\alpha)s_-, \alpha(1-s_+)).
    \end{aligned}
\end{equation}
where $\gamma > 1$ is a constant.

Finally, we prove that
\begin{equation}
    \begin{aligned}
        m > \frac{4((s_++s_-)^2 + (2\alpha-1) (s_+^2-s_-^2))^2}{s_+^2s_-^2((s_+-s_-)^2 + (2\alpha-1) (s_+^2-s_-^2))^2} \log \left( \frac{\gamma}{\min ((1-\alpha)s_-, \alpha(1-s_+))} \right) \ .
    \end{aligned}
\end{equation}

We can guarantee that when the above condition is met, the proposed similarity measure between two nodes from the same category is strictly larger than that for two nodes from different categories.
It is able to reduce the number of noisy edges and missing edges of the single test method by $1/\gamma$ in expectation.

\section{Case Studies on  self attention, $\mathcal{Q}$\textbf{-Attention} and $\mathcal{B}$\textbf{-Attention}}
\label{sec: case study}
To better understand the effect of multiple tests (i.e. the fourth order of statistics), we illustrate the effect of self-attention, $\mathcal{Q}$\textbf{-Attention} and $\mathcal{B}$\textbf{-Attention} with the help of the tool BertViz\footnote{https://github.com/jessevig/bertviz}.
\begin{figure}[h!]
    \centering
    \includegraphics[width=1.0\linewidth]{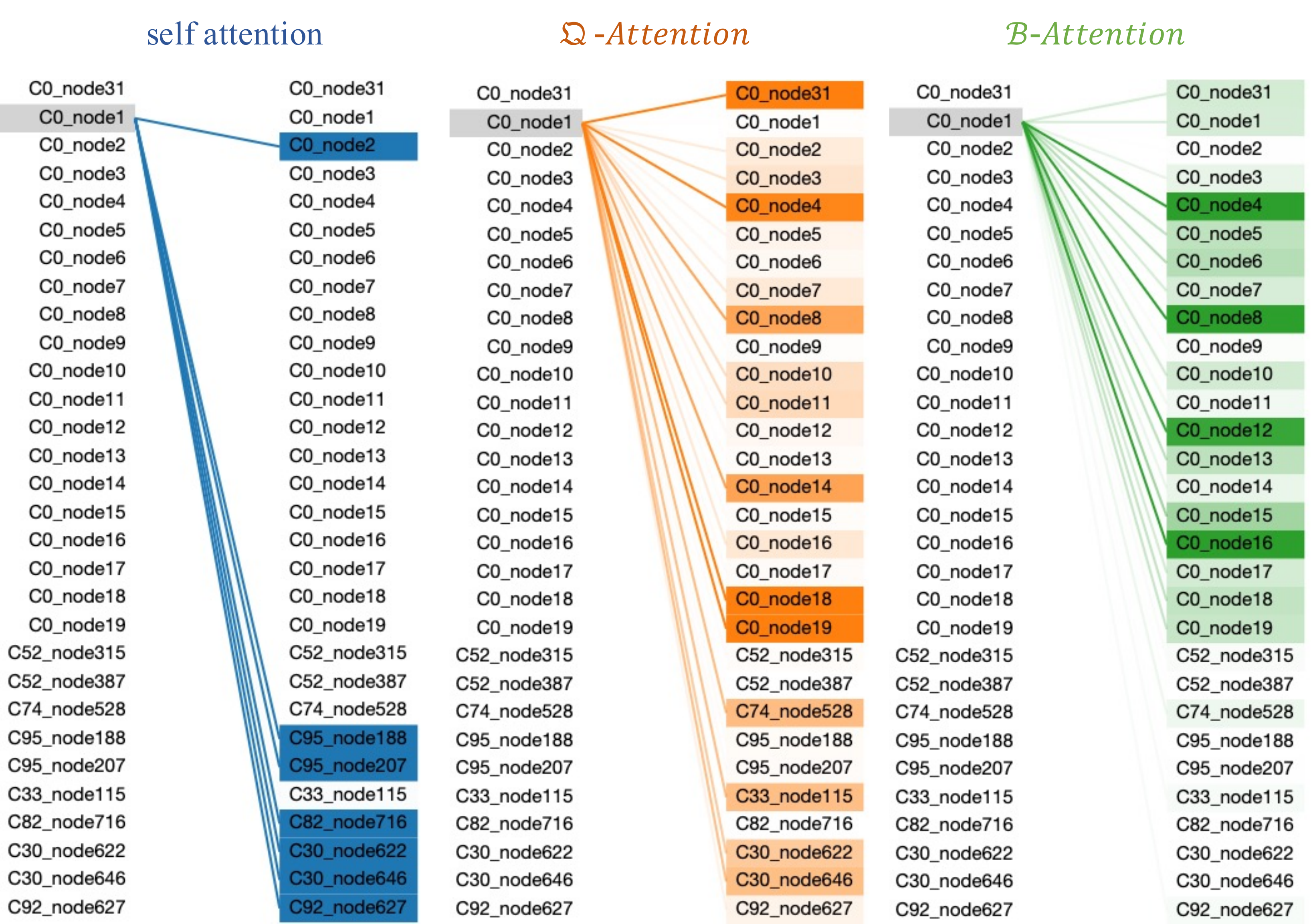}
    \caption{Case studies on self attention, $\mathcal{Q}$\textbf{-Attention} and $\mathcal{B}$\textbf{-Attention}. Each entry contains the category and node index, e.g. ``C0\_node1'' means this node belongs to category 0 and the node index is 1.
    The shade of the colour represents the weight of attention. The darker the colour, the greater the weight.
    The data of this case is sampled from the last layer of $\mathcal{B}$\textbf{-Attention} on the MS-Celeb part1.}
    \label{fig:case_study}
\end{figure}

As shown in the left panel of Figure~\ref{fig:case_study}, self-attention introduces a number of noisy connections between nodes belonging to different categories. In contrast, according to the middle panel of Figure~\ref{fig:case_study}, $\mathcal{Q}\textbf{-Attention}$ is able to introduce many more connections between nodes from the same category, and at the same time significantly reduces the weights for those noisy connections. In the last panel, the combination self-attention with the fourth order statistics, The $\mathcal{B}\textbf{-Attention}$ further removes those noisy connections while keeps most of the clean connections between nodes.
\end{document}